\def\circlenum#1{\text{\textcircled{\textbf{#1}}}}
\newcommand{\defi}{\text{def}}
\newcommand{\argmin}{\mathop{\rm arg\min}}
\newcommand{\bbR}{\mathbb{R}}
\newcommand{\0}{{\mathbf{0}}}
\renewcommand{\a}{{\mathbf{a}}}
\renewcommand{\u}{{\mathbf{u}}}
\renewcommand{\v}{{\mathbf{v}}}
\newcommand{\x}{{\mathbf{x}}}
\newcommand{\z}{{\mathbf{z}}}
\newcommand{\w}{{\mathbf{w}}}
\newcommand{\sign}{\text{sign}}
\newtheorem{theorem}{Theorem}
\newtheorem{lemma}{Lemma}
\newtheorem{proposition}{Proposition}
\newtheorem{remark}{Remark}
\begin{document}

\title{Fully Implicit Online Learning}

\author[*]{Chaobing Song}
\author[+]{Ji Liu}
\author[+]{Han Liu}
\author[*]{Yong Jiang}
\author[+]{Tong Zhang}
\affil[*]{Tsinghua University, \authorcr songcb16@mails.tsinghua.edu.cn, jiangy@sz.tsinghua.edu.cn}
\affil[+]{Tencent AI Lab, \authorcr  ji.liu.uwisc@gmail.com, hanliu@northwestern.edu, tongzhang@tongzhang-ml.org}

\maketitle

\begin{abstract}
Regularized online learning is widely used in machine learning applications.  
In online learning, performing exact minimization ($i.e.,$ implicit update) is known to 
be beneficial to the numerical stability and structure of solution. 
In this paper we study a class of regularized online algorithms without linearizing the loss function or the regularizer, which we call \emph{fully implicit online learning} (FIOL). We show that for arbitrary Bregman divergence, FIOL has the $O(\sqrt{T})$ regret for general convex setting and $O(\log T)$ regret for strongly convex setting, and the regret has an one-step improvement effect because it avoids the approximation error of linearization. Then we propose efficient algorithms to solve the subproblem of FIOL. We show that even if the solution of the subproblem has no closed form, it can be solved with complexity comparable to the linearized online algoritms. Experiments validate the proposed approaches.
\end{abstract}

\section{Introduction}
Online learning \cite{shalev2012online,hazan2016online} has a wide range of applications in recommendation, advertisement, many others.
The commonly used algorithm for online learning is online gradient descent (OGD), which linearizes the loss and regularizer in each step. OGD is simple and easy to implement. However because of linearization, OGD may incur the numerical instability issue if the step size is not properly chosen. Meanwhile it is unable to effectively explore the structure of regularizers.
To overcome the numerical stability issue of OGD, the algorithms to optimize the loss exactly ($i.e.,$ without linearization) are proposed, such as  
the well-known passive aggressive (PA) framework \cite{crammer2006online,dredze2008confidence,wang2012exact,shi2014online}, implicit online learning \cite{kulis2010implicit} and implicit SGD (I-SGD) \cite{toulis2014statistical,toulis2015scalable,toulis2016towards,toulis2017asymptotic}. To explore the structure of regularizer, the algorithms to optimize the regularizer exactly are proposed, such as composite mirror descent (COMID) \cite{duchi2010composite,duchi2011adaptive} and regularized dual averaging (RDA) \cite{xiao2010dual,chen2012optimal}. In the online setting, we call  the exact minimization to loss or regularizer as \emph{implicit update}, because it
is equivalent to OGD with an implicit step size; while the vanilla OGD is called \emph{explicit update}.

 % which is called the  algorithm in this paper.
The methods that only perform implicit update with respect to ($w.r.t.$) regularizer have been well studied, such as COMID and RDA. However, 
 the analysis of the  implicit update $w.r.t.$ the loss function is proved difficult. 
In the case that the regularizer (see $r(w)$ in Table \ref{tb:alg}) does not exist , \cite{crammer2006online} gives relative loss bounds when the loss function is hinge loss or squared hinge loss.  However, the relative loss bounds are unable to be converted to a sublinear regret bound to the best of our knowledge. Then \cite{kulis2010implicit} gives the $O(\sqrt{T})$ regret bound when $f_t(\w)$ is squared loss and the $O(\log T)$ when $f_t(\w)$ is strongly convex.  The above two papers does not show any advantage of implicit update on regret bound. Meanwhile their proofs are only suitable for some particular loss functions. 

When the regularizer exists, and both the loss function and regularizer are not linearized, 
 \cite{mcmahan2010unified} 
gives the first regret bound $O(\sqrt{T})$ for general convex functions and show the one-step improvement of the implicit update. However their analysis is only suitable when the auxiliary function is Euclidean distance or Mahalanobis distance rather than arbitrary Bregman divergence. Meanwhile, \cite{mcmahan2010unified} do not give the $O(\log T)$ regret in the strongly convex setting. Moreover, the one-step improvement in \cite{mcmahan2010unified} is defined based on a constructed function, which may be counterintuitive and makes the analysis complicated. Finally, \cite{mcmahan2010unified} did not provide efficient computational methods for the nontrivial subproblem of FIOL in each iteration.

Consider the benefits of implicit update,  we study the algorithm that performs implicit update on both the loss function and regularizer, which we call fully implicit online learning (FIOL) in this paper. Compared with the theoretical analysis \cite{mcmahan2010unified} for the FIOL paradigm, we make the following improvements. First, our analysis can be applied for the general Bregman divergence, which includes Euclidean distance and Mahalanobis distance as special cases. Second, we given both $O(\sqrt{T})$ regret in the general convex setting and $O(\log{T})$ regret in the strongly convex setting. Third, we quantify the one-step improvement of implicit update as the approximation error of linearization, which makes our analysis be intuitive and  is much simpler than that of \cite{mcmahan2010unified}. 

 Meanwhile, we address the problem of solving the nontrivial subproblem of FIOL in each iteration. For the general online learning problem for empirical risk minimization, we show that the subproblem can be solved to $\epsilon$-accuracy with $O(d\log \frac{1}{\epsilon})$ by the bisection method. Then we show that for the widely used $\ell_1$-norm regularized online learning paradigm, we can solve the resulted subproblem exactly with $O(d\log d)$ cost by an deterministic algorithm and with $O(d)$ expected cost by an randomized algorithm. Experiments validate our results.

 % $\delta_t$ in Eq. \eqref{eq:one-step}. Meanwhile, our analysis is suitable for the general Bregman divergence, which includes Euclidean distance and Mahalanobis distance as special cases. Moreover, we also obtain the logarithmic regret. 

\section{Theory}
\begin{table*}[t]
\caption{The iterative procedures of online learning algorithms}
\begin{tabular}{clllllll}
\hline
Algorithm & $ $ & $ \quad(A)$ & $\quad(B)$ & $ \quad(C)$ & $ $\\
 \hline 
 SGD & $\w_{t+1}\overset{\defi}{=}\argmin_{\w\in\Omega}$ & $\Big\{\langle  f_t^{\prime}(\w_t), \w\rangle $ & $+ \langle r^{\prime}(\w_t), \w\rangle$ & $+ \frac{1}{2\eta_t}\|\w-\w_t\|_2^2\Big\}$&     \\
PA & $\w_{t+1}\overset{\defi}{=}\argmin_{\w\in\Omega}$ & $\Big\{ f_t(\w) $ & $ $ & $+ \frac{1}{2\eta_t}\|\w-\w_t\|_2^2\Big\}$ &\\
IOL  & $\w_{t+1}\overset{\defi}{=}\argmin_{\w\in\Omega}$ & $\Big\{f_t(\w) $ & $ $ & $+ \frac{1}{2\eta_t}B_{\psi}(\w, \w_t)\Big\}$ &\\
COMID  & $\w_{t+1}\overset{\defi}{=}\argmin_{\w\in\Omega}$  & $\Big\{\langle  f^{\prime}_t(\w_t), \w\rangle  $ &  $+ r(\w) $& $+ \frac{1}{2\eta_t}B_{\psi}(\w, \w_t)\Big\}$ & \\
RDA  & $\w_{t+1}\overset{\defi}{=}\argmin_{\w\in\Omega}$  & $\Big\{\frac{1}{t}\sum_{i=1}^t\langle f^{\prime}_i(\w_i), \w\rangle  $ &  $+ r(\w) $& $+ \frac{1}{2t\eta_t}\psi(\w)\Big\}$ & \\
I-SGD & $\w_{t+1}\overset{\defi}{=}\argmin_{\w\in\Omega}$ & $\Big\{f_t(\w)$  & $ +\langle r^{\prime}(\w_t), \w\rangle$ & $+ \frac{1}{2\eta_t}\|\w-\w_t\|_2^2 \Big\}$ &  \\
FIOL (\textbf{This Paper})  & $\w_{t+1}\overset{\defi}{=}\argmin_{\w\in\Omega}$ & $\Big\{f_t(\w)$  & $ +r(\w)$ & $+ \frac{1}{2\eta_t}B_{\psi}(\w, \w_t) \Big\}$ &  \\
 \hline
\end{tabular}\label{tb:alg}
\end{table*}
Before continue, we provide the notations and the problem setting first. 
Let bold italic  denote vector such as $\x\in\bbR^d$ and lower case italic denote scalar such as $x\in \bbR$. Let the Hadamard product of two vectors $\x_1$ and $\x_2$ as $\x_1\odot\x_2$.
We denote a sequence of vectors by subscripts, $i.e.,$ $\w_t,\w_{t+1}, \ldots$, and entries in a vector by non-bold subscripts, such as the $j$-th entry of $\w_t$ is $w_{tj}$. 
Let $\Omega$ denote a closed convex set in $\bbR^d$, and $\|\cdot\|_*$ denote the dual norm of  norm $\|\cdot\|$.  For a convex function $h:\Omega\rightarrow \bbR$, we use $\partial h(\w)$ to denote its subgradient set at $\w$ and use $h^{\prime}(\w)$ denote any subgradient in $\partial h(\w)$, $i.e., h^{\prime}(\w)\in \partial h(\w)$. 
 Throughout, $\psi:\Omega\rightarrow \bbR$ designates a continuously differentiable function that is $\alpha$-strongly convex w.r.t. a norm $\|\cdot\|$ on its domain $\Omega$, if for all $\w, \v\in \Omega$,
\[
\psi(\w)\ge \psi(\v) + \langle \partial \psi(\v), \w-\v\rangle + \frac{\alpha}{2}\|\w-\v\|^2.
\]
The Bregman divergence associated with $\psi(\w)$ is 
\[
B_{\psi}(\w, \v) \overset{\defi}{=} \psi(\w)-\psi(\v) - \langle  \psi^{\prime}(\v), \w-\v\rangle,
\]
which satisfies $B_{\psi}(\w, \v)\ge \frac{\alpha}{2}\|\w-\v\|^2$ for some $\alpha>0$. Finally, we assume the dataset is $\{(\x_1, y_1), (\x_2, y_2)$, $ \ldots, (\x_T, y_T)\}$, where  for all $t\in[T]$, $\x_t\in\bbR^d$ is the feature vector and $y_t\in\bbR$ is the predictive value.

In this paper we mainly consider the regularized loss minimization problem, 
 \begin{eqnarray}
  \min_{\w\in\Omega}\left\{\frac{1}{T}\sum_{t=1}^T f_t(\w) +r(\w)\right\}, \label{eq:erm}
  \end{eqnarray} 
where $\forall t\in [T], f_t:\Omega\rightarrow\bbR $ is a convex loss function, $r:\Omega\rightarrow \bbR$ is a convex regularizer, and both functions have the trivial lower bound $\forall \w\in\bbR^d, f_t(\w)\ge0, r(\w)\ge 0$.  
Examples of the above formulation include many well-known classification and regression problems. 
For binary classification, the predictive value $y_t\in\{+1,-1\}$. The  linear support vector machine (SVM) is obtained by setting $\Omega = \bbR^d, f_t(\w)=\max\{1-y_t \x_t^T \w,0\}$ and $r(\w)=\frac{\lambda}{2}\|\w\|_2^2$. For regression,  $y_t\in\bbR$. Lasso is obtained by setting $\Omega = \bbR^d, f_t(\w)=\frac{1}{2}(y_t-\x_t^T\w)^2$ and $r(\w)=\lambda\|\w\|_1$.

In online learning, Eq. \eqref{eq:erm} is optimized by a player choosing a $\w_t$ from the convex set $\Omega$ in each iteration, then a  convex loss $f_t$ is revealed and the player pays the regularized loss $f_t(\w)+r(\w)$. 
 % a player chooses some point wt from a convex set at each iteration. A convex loss function lt is revealed, and the 
The basic task in online learning is to find an algorithm that can minimize the following regularized regret bound 
\begin{eqnarray}
R_T&\overset{\defi}{=}&\sum_{t=1}^T \left(f_t(\w_t)+r(\w_t)\right)- \min_{\w\in\Omega} \Big(\sum_{t=1}^T \left(f_t(\w) + r(\w)\right)\Big)\label{eq:regret}
\end{eqnarray}
with a sublinear rate $o(T)$. Besides the regret bound, the numerical stability and the property of solution are also of major concern.

Table \ref{tb:alg} gives the iterative procedures of some representative  online learning algorithms. In order to stabilize the iteration, all the procedures involve an auxiliary function in the $(C)$ part, where $\eta_t$ denotes the step size. SGD \cite{robbins1985stochastic}, PA \cite{crammer2006online} and I-SGD \cite{toulis2014statistical} are mainly designed for the  auxiliary function of Euclidean distance $\frac{1}{2}\|\cdot\|^2_2$, which is a special case of Bregman divergence by setting $\psi(\w)=\frac{1}{2}\|\w\|_2^2$. All other algorithms in Table \ref{tb:alg} are suitable for general $\psi(\w)$ or its Bregman divergence. 
As shown in Table \ref{tb:alg}, SGD \cite{robbins1985stochastic} linearizes both terms $f_t(\w)$ and $r(\w)$; PA \cite{crammer2006online} and IOL \cite{kulis2010implicit} perform exact minimization ($i.e,$ implicit update) on $f_t(\w)$, while do not consider the regularizer $r(\w)$;  
COMID \cite{duchi2010composite} and RDA \cite{xiao2010dual} linearize $f_t(\w)$ and perform implicit update on $r(\w)$; I-SGD linearizes $r(\w)$ and performs implicit update on $f_t(\w)$. All the above algorithms need some linearization of $f_t(\w)$ or $r(\w)$, while the FIOL algorithm 
\begin{align}
\w_{t+1}\overset{\defi}{=}\argmin_{\w\in\Omega}\Big\{f_t(\w) +r(\w)+ \frac{1}{2\eta_t}B_{\psi}(\w, \w_t) \Big\} \label{eq:double-implicit}
\end{align}
studied in this paper do not need the linearization operation. 

By using implicit update on both $f_t(\w)$ and $r(\w)$, an explicit advantage is that we get rid of the approximation error by linearization, which can be defined by 
\begin{eqnarray}
\delta_t &\overset{\defi}{=}& (f_t(\w_{t+1})+r(\w_{t+1})) - \big(f_t(\w_{t})+r(\w_t)+\langle f^{\prime}_t(\w_{t})+ r^{\prime}(\w_t), \w_{t+1}-\w_t\rangle\big).
\label{eq:one-step}
\end{eqnarray}

Because we assume that both $f_t(\w)$ and $r(\w)$ are convex, we have $\delta_t\ge 0$. 
By the definition of $\delta_t$, we obtain Lemma \ref{lem:lem1} by using a rather straightforward extension of the analysis of online mirror descent \cite{beck2003mirror}. 
\begin{lemma} \label{lem:lem1}
Let the sequence $\{\w_t\}$ be defined by the FIOL algorithm in Eq. \eqref{eq:double-implicit}. Assume that for all $t$, $f_t(\w)+r(\w)$ is convex. 
Then for any $\w\in\bbR^d$, we have
\begin{align}
&(f_t(\w_{t})+r(\w_{t}))- (f_t( \w)+r(\w))
\le\frac{1}{\eta_t}\left(B_{\psi}(\w, \w_t) -B_{\psi}(\w, \w_{t+1})\right)+\frac{\eta_t}{2\alpha}\| f^{\prime}_t(\w_t) + r^{\prime}(\w_t) \|_*^2-\delta_t.\label{eq:lem1-1}
\end{align}
\end{lemma}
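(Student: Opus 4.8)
The plan is to run the standard online mirror descent argument, but anchored at the \emph{new} iterate $\w_{t+1}$ rather than at $\w_t$, and then to transfer the resulting bound back to $\w_t$ using exactly the quantity $\delta_t$. First I would write the first-order optimality condition for the strongly convex subproblem \eqref{eq:double-implicit}: since $\w_{t+1}$ minimizes $f_t(\w)+r(\w)+\frac{1}{2\eta_t}B_{\psi}(\w,\w_t)$ over $\Omega$, and $\nabla_{\w}B_{\psi}(\w,\w_t)=\psi'(\w)-\psi'(\w_t)$, there exist subgradients $f_t'(\w_{t+1})\in\partial f_t(\w_{t+1})$ and $r'(\w_{t+1})\in\partial r(\w_{t+1})$ with
\[
\Big\langle f_t'(\w_{t+1})+r'(\w_{t+1})+\tfrac{1}{\eta_t}\big(\psi'(\w_{t+1})-\psi'(\w_t)\big),\, \w-\w_{t+1}\Big\rangle \ge 0 \qquad \text{for all } \w\in\Omega.
\]
Substituting the three-point identity $\langle \psi'(\w_{t+1})-\psi'(\w_t),\w-\w_{t+1}\rangle = B_{\psi}(\w,\w_t)-B_{\psi}(\w,\w_{t+1})-B_{\psi}(\w_{t+1},\w_t)$ and using convexity of $f_t+r$ at $\w_{t+1}$ to lower-bound $\langle f_t'(\w_{t+1})+r'(\w_{t+1}),\w_{t+1}-\w\rangle$ by $(f_t(\w_{t+1})+r(\w_{t+1}))-(f_t(\w)+r(\w))$, I would obtain the one-step-ahead bound
\[
(f_t(\w_{t+1})+r(\w_{t+1}))-(f_t(\w)+r(\w)) \le \tfrac{1}{\eta_t}\big(B_{\psi}(\w,\w_t)-B_{\psi}(\w,\w_{t+1})-B_{\psi}(\w_{t+1},\w_t)\big).
\]

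Next I would move from $\w_{t+1}$ back to the played point $\w_t$. Rearranging the definition \eqref{eq:one-step} of $\delta_t$ gives the exact identity $f_t(\w_t)+r(\w_t) = (f_t(\w_{t+1})+r(\w_{t+1})) - \langle f_t'(\w_t)+r'(\w_t),\,\w_{t+1}-\w_t\rangle - \delta_t$. Subtracting $f_t(\w)+r(\w)$ from both sides and inserting the one-step-ahead bound yields
\[
(f_t(\w_t)+r(\w_t))-(f_t(\w)+r(\w)) \le \tfrac{1}{\eta_t}\big(B_{\psi}(\w,\w_t)-B_{\psi}(\w,\w_{t+1})\big) - \tfrac{1}{\eta_t}B_{\psi}(\w_{t+1},\w_t) - \langle f_t'(\w_t)+r'(\w_t),\,\w_{t+1}-\w_t\rangle - \delta_t,
\]
so it only remains to absorb the leftover Bregman term and the inner product into the advertised gradient term. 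For this I would use the strong convexity bound $B_{\psi}(\w_{t+1},\w_t)\ge\tfrac{\alpha}{2}\|\w_{t+1}-\w_t\|^2$ together with Young's inequality in the form $-\langle s,\,\w_{t+1}-\w_t\rangle \le \tfrac{\eta_t}{2\alpha}\|s\|_*^2 + \tfrac{\alpha}{2\eta_t}\|\w_{t+1}-\w_t\|^2$ with $s=f_t'(\w_t)+r'(\w_t)$; the two $\|\w_{t+1}-\w_t\|^2$ contributions cancel exactly, leaving $\tfrac{\eta_t}{2\alpha}\|f_t'(\w_t)+r'(\w_t)\|_*^2$, which is precisely \eqref{eq:lem1-1}.

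I expect the main subtlety to be the bookkeeping of \emph{which point} each subgradient and function value lives at: the optimality condition and the convexity inequality are naturally applied at $\w_{t+1}$, whereas the regret and the error $\delta_t$ are anchored at the played point $\w_t$, and it is exactly the definition of $\delta_t$ that bridges the two with no approximation loss (indeed this is the source of the $-\delta_t$ ``one-step improvement''). A secondary point requiring care is the constant matching in Young's inequality, which must be tuned so that the generated $\tfrac{\alpha}{2\eta_t}\|\w_{t+1}-\w_t\|^2$ term offsets the $-\tfrac{1}{\eta_t}B_{\psi}(\w_{t+1},\w_t)$ term via strong convexity; any other weighting would leave a stray squared-distance residual. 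Finally, I would take the comparator $\w$ in $\Omega$, where the variational inequality holds, consistent with the domain of the minimum in \eqref{eq:regret}.
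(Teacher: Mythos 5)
Your proposal is correct and follows essentially the same route as the paper's own proof: optimality condition at $\w_{t+1}$ combined with convexity and the three-point identity to get the one-step-ahead bound, the exact identity from the definition of $\delta_t$ to transfer the bound from $\w_{t+1}$ to $\w_t$, and then strong convexity of $\psi$ plus Young's inequality (which the paper calls Fenchel--Young) with the weighting $\tfrac{\eta_t}{2\alpha}\|\cdot\|_*^2+\tfrac{\alpha}{2\eta_t}\|\cdot\|^2$ so the squared-distance terms cancel. The bookkeeping subtleties you flag (subgradients evaluated at $\w_{t+1}$ for optimality/convexity versus at $\w_t$ for $\delta_t$, and the exact constant matching in Young's inequality) are precisely the points the paper's proof navigates.
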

Compared with the analysis of online mirror descent \cite{beck2003mirror}, the only difference is an extra term $\delta_t\ge 0$ exists on the right hand side (RHS) of \eqref{eq:lem1-1}.  
Then based on Lemma \ref{lem:lem1}, we have Theorem \ref{thm:one-step}.
\begin{theorem}\label{thm:one-step}
Let the sequence $\{\w_t\}$ be defined by the FIOL algorithm in Eq. \eqref{eq:double-implicit}.  Assume that for all $t$, $f_t(\w)+r(\w)$ is convex and $\w^*=\argmin_{\w\in\Omega} \Big(\sum_{t=1}^T \left(f_t(\w) + r(\w)\right)\Big)$.  For $t\in[T]$,
there are constants $G$ such that $\| f_t^{\prime}(\w_t)+ r^{\prime}(\w_t)\|_*\le G$ and $D$ such that $B_{\psi}(\w^*,\w_t)\le D^2$. 
Then by setting $\eta_t=\frac{\sqrt{2\alpha }D}{G\sqrt{T}}$, it follows that,
\begin{eqnarray}
R_T\le \frac{GD\sqrt{2T}}{\sqrt{\alpha}} - \sum_{t=1}^T\delta_t,\label{eq:regret1}
\end{eqnarray}

by setting $\eta_t=\frac{\sqrt{\alpha }D}{G\sqrt{t}}$, it follows that
\begin{eqnarray}
R_T\le \frac{2GD\sqrt{T}}{\sqrt{\alpha}} - \sum_{t=1}^T\delta_t,\label{eq:regret1}
\end{eqnarray}
where $R_T$ is the regularized regret defined in Eq. \eqref{eq:regret} and $\delta_t$ is the one-step improvement in Eq. \eqref{eq:one-step}.
\end{theorem}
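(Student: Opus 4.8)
The plan is to instantiate Lemma~\ref{lem:lem1} at the comparator $\w=\w^*$ and to sum the resulting inequality over $t=1,\dots,T$. Since $\w^*$ minimizes $\sum_{t=1}^T(f_t(\w)+r(\w))$ over $\Omega$, the summed left-hand side is exactly the regularized regret $R_T$ from Eq.~\eqref{eq:regret}. On the right-hand side three groups of terms appear: a Bregman ``potential'' sum $\sum_t \frac{1}{\eta_t}\big(B_{\psi}(\w^*,\w_t)-B_{\psi}(\w^*,\w_{t+1})\big)$, a gradient-norm sum $\sum_t \frac{\eta_t}{2\alpha}\|f_t^{\prime}(\w_t)+r^{\prime}(\w_t)\|_*^2$, and the improvement sum $-\sum_t\delta_t$. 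The last term passes through unchanged to the final bound (this is what produces the $-\sum_t\delta_t$ on the RHS of Eq.~\eqref{eq:one-step}), so all the work lies in controlling the first two groups and choosing $\eta_t$.

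For the constant step size $\eta_t\equiv\eta$ (the first claim), I would observe that the Bregman sum telescopes to $\frac{1}{\eta}\big(B_{\psi}(\w^*,\w_1)-B_{\psi}(\w^*,\w_{T+1})\big)\le \frac{1}{\eta}B_{\psi}(\w^*,\w_1)\le \frac{D^2}{\eta}$, using $B_{\psi}\ge 0$ and the assumed bound $B_{\psi}(\w^*,\w_t)\le D^2$. The gradient-norm sum is at most $\frac{\eta}{2\alpha}G^2 T$ via $\|f_t^{\prime}(\w_t)+r^{\prime}(\w_t)\|_*\le G$. This gives $R_T\le \frac{D^2}{\eta}+\frac{\eta G^2 T}{2\alpha}-\sum_t\delta_t$; minimizing the first two terms in $\eta$ (AM--GM, or setting the derivative to zero) selects $\eta=\frac{\sqrt{2\alpha}D}{G\sqrt{T}}$ and yields $R_T\le \frac{GD\sqrt{2T}}{\sqrt{\alpha}}-\sum_t\delta_t$.

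For the decreasing step size $\eta_t=\frac{\sqrt{\alpha}D}{G\sqrt{t}}$ (the second claim) I would replace the naive telescoping with a summation-by-parts (Abel) rearrangement of the Bregman sum. Writing $B_t\eqdef B_{\psi}(\w^*,\w_t)$, one regroups $\sum_t \frac{1}{\eta_t}(B_t-B_{t+1})$ as $\frac{B_1}{\eta_1}-\frac{B_{T+1}}{\eta_T}+\sum_{t=2}^T B_t\big(\frac{1}{\eta_t}-\frac{1}{\eta_{t-1}}\big)$. Because $1/\eta_t$ is increasing and $B_t\le D^2$, $B_{T+1}\ge0$, this is at most $D^2/\eta_T=\frac{GD\sqrt{T}}{\sqrt{\alpha}}$. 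The gradient-norm sum becomes $\frac{G^2}{2\alpha}\sum_t\eta_t=\frac{GD}{2\sqrt{\alpha}}\sum_{t=1}^T t^{-1/2}$, and the elementary estimate $\sum_{t=1}^T t^{-1/2}\le 2\sqrt{T}$ bounds it by $\frac{GD\sqrt{T}}{\sqrt{\alpha}}$. Adding the two pieces gives $R_T\le \frac{2GD\sqrt{T}}{\sqrt{\alpha}}-\sum_t\delta_t$.

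The only genuinely delicate step is the Abel rearrangement in the decreasing step-size case: one must check that the boundary term $-B_{T+1}/\eta_T$ can be safely discarded (using $B_{T+1}\ge0$) and that the monotonicity of $1/\eta_t$ lets each $B_t$ be uniformly replaced by $D^2$ so the inner sum telescopes cleanly to $1/\eta_T$. Everything else is a routine combination of nonnegativity of the Bregman divergence, the uniform gradient bound $G$, and the harmonic-type estimate $\sum_{t\le T}t^{-1/2}\le2\sqrt{T}$.
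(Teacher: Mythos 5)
Your proof is correct, and for the decreasing step size $\eta_t=\frac{\sqrt{\alpha}D}{G\sqrt{t}}$ it follows essentially the same route as the paper: instantiate Lemma~\ref{lem:lem1} at $\w=\w^*$, sum over $t$, handle the Bregman sum by an Abel-type regrouping (the paper writes the coefficient $\sqrt{t}$ as $\sqrt{t-1}+(\sqrt{t}-\sqrt{t-1})$ so that the first part telescopes and the increments are absorbed by $B_\psi(\w^*,\w_t)\le D^2$, which is exactly your summation-by-parts bound $D^2/\eta_T$), and control the step-size sum via $\sum_{t\le T}t^{-1/2}\le 2\sqrt{T}$. Two points are worth noting. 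First, your write-up is actually more complete than the paper's: the appendix proof only treats the decreasing step size and never establishes the constant-step claim $\eta_t=\frac{\sqrt{2\alpha}D}{G\sqrt{T}}$ at all; your telescoping-plus-AM--GM argument for that case is the missing piece, and your constants check out ($\frac{D^2}{\eta}+\frac{\eta G^2T}{2\alpha}=\frac{2GD\sqrt{T}}{\sqrt{2\alpha}}=\frac{GD\sqrt{2T}}{\sqrt{\alpha}}$). Second, the paper's derivation contains a notational slip --- midway through it replaces the Bregman divergences $B_\psi(\w^*,\w_t)$ by Euclidean distances $\|\w_t-\w\|_2^2$, which is unjustified for general $\psi$ --- whereas your argument stays in the general Bregman setting throughout, using only $B_\psi\ge 0$, $B_\psi(\w^*,\w_t)\le D^2$, and monotonicity of $1/\eta_t$, so it is the cleaner justification of the theorem as stated.
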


By Theorem \ref{thm:one-step}, compared with the regret of online gradient descent \cite{hazan2016online}, FIOL has an extra gain $\sum_{t=1}^T\delta_t$, which shows the effect of FIOL that it avoids the approximation error of linearization.

Similar to online gradient descent, by assuming that for all $t,$ $f_t(\w) + r(\w)$ is $\sigma$-strongly convex $w.r.t.$ to $\psi(\w)$, that is for any $\w, \v\in \Omega$,
\begin{align}
f_t(\w) + r(\w)\ge& f_t(\v) +r(\v)+\langle 	f_t^{\prime}(\v) + r^{\prime}(\v), \w-\v\rangle + \sigma B_{\psi}(\w, \v),	
\end{align}
 we can obtain logarithmic regret for FIOL in Theorem \ref{thm:strong1}.
\begin{theorem}\label{thm:strong1}
Let the sequence $\{\w_t\}$ be defined by the FIOL algorithm in Eq. \eqref{eq:double-implicit}. Assume that for all $t$, $f_t(\w) + r(\w)$ is $\sigma$-strongly convex $w.r.t.$ $\psi(\w)$ and  $\|f_t^{\prime}(\w) + r^{\prime}(\w)\|_*\le G$. By setting $\eta_t = \frac{1}{\sigma t}$, then we have
\begin{eqnarray}
R_T\le \sigma B_{\psi}(\w, \w_1) +  \frac{G^2\log T}{2\alpha\sigma}-\sum_{t=1}^T \delta_t.
 \end{eqnarray} 
\end{theorem}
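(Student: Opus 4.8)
The plan is to re-run the proof of Lemma \ref{lem:lem1} almost verbatim, sharpening the single place where plain convexity of $f_t+r$ was used, and then to exploit the specific decay $\eta_t=\frac{1}{\sigma t}$ so that the resulting divergence terms telescope into a logarithmic bound.

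\textbf{Step 1 (a strongly convex refinement of Lemma \ref{lem:lem1}).} In deriving \eqref{eq:lem1-1}, the only use of convexity of $f_t+r$ is the subgradient inequality at $\w_t$, which passes from the round-$t$ loss gap to the linear term $\langle f_t'(\w_t)+r'(\w_t),\,\w_t-\w\rangle$. Under the stated $\sigma$-strong convexity of $f_t+r$ w.r.t. $\psi$, the same inequality instead reads
\[
(f_t(\w_t)+r(\w_t))-(f_t(\w)+r(\w))\le \langle f_t'(\w_t)+r'(\w_t),\,\w_t-\w\rangle-\sigma B_{\psi}(\w,\w_t).
\]
Carrying the remaining steps of the Lemma \ref{lem:lem1} derivation through unchanged (the mirror-descent identity producing $B_\psi(\w,\w_t)-B_\psi(\w,\w_{t+1})$, the dual-norm bound, and the implicit-update gain $\delta_t$ of \eqref{eq:one-step} are untouched), I obtain exactly the right-hand side of \eqref{eq:lem1-1} with one extra nonpositive gain:
\[
(f_t(\w_t)+r(\w_t))-(f_t(\w)+r(\w))\le \tfrac{1}{\eta_t}\bigl(B_{\psi}(\w,\w_t)-B_{\psi}(\w,\w_{t+1})\bigr)+\tfrac{\eta_t}{2\alpha}\|f_t'(\w_t)+r'(\w_t)\|_*^2-\delta_t-\sigma B_{\psi}(\w,\w_t).
\]

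\textbf{Step 2 (summation and telescoping).} Using $\|f_t'(\w_t)+r'(\w_t)\|_*\le G$ and summing $t=1,\dots,T$, the divergence terms aggregate into $\sum_{t=1}^T\bigl[(\tfrac{1}{\eta_t}-\sigma)B_{\psi}(\w,\w_t)-\tfrac{1}{\eta_t}B_{\psi}(\w,\w_{t+1})\bigr]$. The whole argument hinges on the choice $\eta_t=\frac{1}{\sigma t}$, which gives the arithmetic identity $\frac{1}{\eta_t}-\sigma=\sigma(t-1)=\frac{1}{\eta_{t-1}}$: the coefficient of $B_{\psi}(\w,\w_t)$ entering at round $t$ matches the one leaving at round $t-1$, so all interior terms cancel. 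What survives is the leading contribution, bounded by $\sigma B_{\psi}(\w,\w_1)$, together with the nonpositive final contribution $-\sigma T\,B_{\psi}(\w,\w_{T+1})\le 0$, which I discard.

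\textbf{Step 3 (harmonic step-size sum and conclusion).} The step-size term becomes $\frac{G^2}{2\alpha}\sum_{t=1}^T\eta_t=\frac{G^2}{2\alpha\sigma}\sum_{t=1}^T\frac1t\le \frac{G^2\log T}{2\alpha\sigma}$, while the implicit-update terms sum to $-\sum_{t=1}^T\delta_t$. Substituting the surviving divergence bound from Step 2 and recalling the definition of $R_T$ in \eqref{eq:regret} gives the claimed inequality. The main obstacle is Step 2: the cancellation is driven entirely by the identity $\frac{1}{\eta_t}-\sigma=\frac{1}{\eta_{t-1}}$ for $\eta_t=\frac{1}{\sigma t}$, and one must be careful to track which boundary term is retained (at most $\sigma B_{\psi}(\w,\w_1)$) and which is dropped as nonpositive. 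By contrast, Step 1 is routine once one notices that the strong-convexity assumption differs from plain convexity precisely by the additive term $\sigma B_{\psi}(\w,\w_t)$, which simply rides along to the end of the Lemma \ref{lem:lem1} computation.
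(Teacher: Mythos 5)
Your Step 1 is where the proof breaks, and the gap is not cosmetic. The proof of Lemma \ref{lem:lem1} never uses the subgradient inequality at $\w_t$ toward the comparator $\w$: the only place convexity of $f_t+r$ is applied toward $\w$ is at the \emph{post-update} point $\w_{t+1}$ (the first inequality of that proof), because it is the optimality condition of $\w_{t+1}$ that converts the linear term $\langle f_t'(\w_{t+1})+r'(\w_{t+1}),\w_{t+1}-\w\rangle$ into the Bregman three-point identity. Consequently, upgrading convexity to $\sigma$-strong convexity produces the extra term $-\sigma B_{\psi}(\w,\w_{t+1})$, not $-\sigma B_{\psi}(\w,\w_t)$; this is exactly the paper's Lemma \ref{lem:strong1}. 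Your per-round inequality with $-\sigma B_{\psi}(\w,\w_t)$ is not just unproven --- it is false. Take $d=1$, $\psi(w)=\frac12 w^2$ (so $\alpha=1$), $r\equiv 0$, $f_t(w)=\frac12 w^2$ (so $\sigma=1$), $\w_1=1$, $\eta_1=1$ (the theorem's own choice at $t=1$), comparator $\w=0$, and the update scaled as in the proof of Lemma \ref{lem:lem1}. Then $\w_2=\frac12$, $\delta_1=\frac18$, $B_{\psi}(\w,\w_1)=\frac12$, $B_{\psi}(\w,\w_2)=\frac18$, $\|f_1'(\w_1)\|_*=1$, and your inequality reads
\begin{equation*}
\tfrac12 \;\le\; \bigl(\tfrac12-\tfrac18\bigr)+\tfrac12-\tfrac18-1\cdot\tfrac12 \;=\;\tfrac14,
\end{equation*}
which fails, whereas the paper's version (with $-\sigma B_{\psi}(\w,\w_2)$ on the right) reads $\frac12\le\frac58$ and holds. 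The intuition you borrowed --- ``strong convexity differs from convexity by an additive $\sigma B_{\psi}(\w,\w_t)$ that rides along'' --- is the \emph{explicit}-update (OGD/mirror-descent) analysis, where the loss gap is indeed linearized at $\w_t$; it does not transfer to the implicit update.

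The error then propagates into Step 2. Your cancellation $\frac{1}{\eta_t}-\sigma=\frac{1}{\eta_{t-1}}$ is the explicit-OGD telescoping pattern; note it would make the coefficient of $B_{\psi}(\w,\w_1)$ equal to $\frac{1}{\eta_1}-\sigma=0$, i.e., it would prove a bound with \emph{no} $\sigma B_{\psi}(\w,\w_1)$ term at all --- a red flag, since the theorem explicitly carries that term. The correct bookkeeping, starting from Lemma \ref{lem:strong1}, matches the coefficient $\frac{1}{\eta_t}+\sigma$ of $B_{\psi}(\w,\w_{t+1})$ leaving round $t$ with the coefficient $\frac{1}{\eta_{t+1}}$ entering round $t+1$ (again an identity for $\eta_t=\frac{1}{\sigma t}$, since $\sigma t+\sigma=\sigma(t+1)$), so the unique surviving term is $\frac{1}{\eta_1}B_{\psi}(\w,\w_1)=\sigma B_{\psi}(\w,\w_1)$, and the negative term $-(\frac{1}{\eta_T}+\sigma)B_{\psi}(\w,\w_{T+1})$ is dropped. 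Your Step 3 (harmonic sum and the $-\sum_t\delta_t$ bookkeeping) is fine and identical to the paper. So the fix is local but essential: replace your Step 1 by strong convexity applied at $\w_{t+1}$, i.e., by Lemma \ref{lem:strong1}, and redo the telescoping in Step 2 with the shifted coefficients.
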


\subsection{The numerical stability of FIOL}
In Theorems \ref{thm:one-step} and \ref{thm:strong1}, where the step size is carefully chosen, the extra gain $\sum_{t=1}^{T}\delta_t$ may be small. 
However, if the step size is overlarge, in this subsection, we use a particular example to show that FIOL will not diverge, but stabilize the iteration in a fixed accuracy. 

For all $t,$ we assume $f_t(\w) \overset{\defi}{=} \phi_t(\x_t^T\w)$, where $\w\in\bbR^d$ and $\phi_t: \bbR\rightarrow \bbR$ is a $\gamma$-strongly convex function \footnote{It should be noted that the fact $\phi_t(z)$ is strongly convex about $z$ does not imply $f_t(\w)$ is strongly convex about $\w$} $w.r.t.$ $\frac{1}{2}\|\cdot\|_2^2$, $i.e.,$ for all $z, y\in\bbR$, $$\phi_t(z) \ge \phi_t(y) +\langle  \phi_t^{\prime}(y), z-y\rangle + \frac{\gamma}{2}(z-y)^2,$$
and set $\psi(\w) = \frac{1}{2}\|\w\|_2^2$. Then we have Proposition \ref{prop:1}. 
\begin{proposition}\label{prop:1}
Let the sequence $\{\w_t\}$ be defined by the FIOL algorithm in Eq. \eqref{eq:double-implicit}. Assume that for all $t$, $f_t(\w)+r(\w)$ is convex and $\eta_t = \eta_0$. 
Then for any $\w\in\bbR^d$, we have
\begin{align}
R_T
\le&\frac{1}{2\eta_0}\|\w_1-\w^*\|_2^2+r(\w_1) + \frac{\eta_0\|\x_t\|_2^2 (\phi_t^{\prime}(z))^2|_{z=\x_t^T\w_t}}{2(1+\gamma\eta_0\|\x_t\|_2^2)}.\label{eq:prop}
\end{align}
\end{proposition}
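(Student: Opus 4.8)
The plan is to bypass Lemma~\ref{lem:lem1} (whose right-hand side carries $r'(\w_t)$ and no $r(\w_1)$ term) and instead run a COMID-style argument anchored at the \emph{new} iterate $\w_{t+1}$, which is exactly what makes the term $r(\w_1)$ surface. First I would write the first-order optimality condition of the FIOL subproblem \eqref{eq:double-implicit} with $\psi(\w)=\tfrac12\|\w\|_2^2$: there exist subgradients $f_t'(\w_{t+1})=\phi_t'(\x_t^T\w_{t+1})\x_t$ and $r'(\w_{t+1})\in\partial r(\w_{t+1})$ such that $\langle \phi_t'(\x_t^T\w_{t+1})\x_t + r'(\w_{t+1}) + \tfrac{1}{\eta_0}(\w_{t+1}-\w_t),\,\w-\w_{t+1}\rangle\ge 0$ for all $\w\in\Omega$. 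Taking $\w=\w^*$ and using convexity of $f_t$ and $r$ at $\w_{t+1}$ gives $f_t(\w_{t+1})+r(\w_{t+1})-f_t(\w^*)-r(\w^*)\le -\tfrac{1}{\eta_0}\langle \w_{t+1}-\w_t,\,\w_{t+1}-\w^*\rangle$.

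Next I would apply the Euclidean three-point identity to the right-hand side, turning it into $\tfrac{1}{2\eta_0}\big(\|\w_t-\w^*\|_2^2-\|\w_{t+1}-\w^*\|_2^2\big)-\tfrac{1}{2\eta_0}\|\w_t-\w_{t+1}\|_2^2$. To recover the actual regret summand $f_t(\w_t)+r(\w_t)-f_t(\w^*)-r(\w^*)$, I would add and subtract, writing it as the bound above plus $\big(f_t(\w_t)-f_t(\w_{t+1})\big)+\big(r(\w_t)-r(\w_{t+1})\big)$. Summing over $t$, the squared-distance differences telescope to at most $\tfrac{1}{2\eta_0}\|\w_1-\w^*\|_2^2$ (dropping the nonpositive $-\|\w_{T+1}-\w^*\|_2^2$ term), and the regularizer differences telescope to $r(\w_1)-r(\w_{T+1})\le r(\w_1)$ since $r\ge 0$; these yield the first two terms of the claim.

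The crux, and what I expect to be the main obstacle, is the remaining per-step quantity $E_t \eqdef f_t(\w_t)-f_t(\w_{t+1})-\tfrac{1}{2\eta_0}\|\w_t-\w_{t+1}\|_2^2$. The key is to keep the negative quadratic term rather than bound the gradient crudely, and reduce to one dimension: writing $s=\x_t^T(\w_t-\w_{t+1})$ and using Cauchy--Schwarz $\|\w_t-\w_{t+1}\|_2^2\ge s^2/\|\x_t\|_2^2$ gives $E_t\le \phi_t(\x_t^T\w_t)-\phi_t(\x_t^T\w_t-s)-\tfrac{s^2}{2\eta_0\|\x_t\|_2^2}$. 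Then the $\gamma$-strong convexity of $\phi_t$ yields $\phi_t(\x_t^T\w_t)-\phi_t(\x_t^T\w_t-s)\le \phi_t'(\x_t^T\w_t)\,s-\tfrac{\gamma}{2}s^2$, so $E_t$ is at most a concave quadratic in $s$ with leading coefficient $-\tfrac12\big(\gamma+\tfrac{1}{\eta_0\|\x_t\|_2^2}\big)$; using only convexity (the $\gamma=0$ case) would instead give the unstable $\tfrac{\eta_0}{2}\|\x_t\|_2^2(\phi_t'(\x_t^T\w_t))^2$.

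Maximizing this quadratic over $s\in\bbR$ produces exactly $\tfrac{\eta_0\|\x_t\|_2^2(\phi_t'(\x_t^T\w_t))^2}{2(1+\gamma\eta_0\|\x_t\|_2^2)}$, which is the third term (summed over $t$; the statement's last term should carry $\sum_{t=1}^T$). The strong-convexity factor $(1+\gamma\eta_0\|\x_t\|_2^2)$ in the denominator is precisely what keeps the bound finite, tending to $(\phi_t'(\x_t^T\w_t))^2/(2\gamma)$ rather than diverging as $\eta_0\to\infty$, which is the numerical-stability conclusion the proposition targets. One minor point to flag is that for a general constraint set $\Omega$ the first inequality must be read off the variational form of optimality stated above rather than a gradient equation; this changes nothing, since only the inner product against $\w_{t+1}-\w^*$ is ever used.
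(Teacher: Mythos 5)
Your proof is correct and reproduces the paper's bound exactly, but it executes the key step by a more elementary route than the paper does. The skeleton is the same on both sides: the paper also anchors the analysis at $\w_{t+1}$ (its citation of Lemma~\ref{lem:lem1} is really a variant in which only $f_t$ is linearized, so that $r(\w_{t+1})$ survives on the left-hand side and telescopes to $r(\w_1)$ after summation), and it also invokes the $\gamma$-strong convexity of $\phi_t$ to strengthen the negative quadratic term; indeed your $E_t$ equals the paper's $-\langle f_t^{\prime}(\w_t),\w_{t+1}-\w_t\rangle-\hat{\delta}_t-\frac{1}{2\eta_0}\|\w_{t+1}-\w_t\|_2^2$ verbatim, since $\hat{\delta}_t\eqdef f_t(\w_{t+1})-f_t(\w_t)-\langle f_t^{\prime}(\w_t),\w_{t+1}-\w_t\rangle$. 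The genuine difference is in how this quantity is maximized: the paper stays in $\bbR^d$, folds the strong-convexity gain into the matrix quadratic $-\frac{1}{2\eta_0}(\w_{t+1}-\w_t)^T(I+\gamma\eta_0\x_t\x_t^T)(\w_{t+1}-\w_t)$, completes the square via the matrix square roots $(I+\gamma\eta_0\x_t\x_t^T)^{\pm 1/2}$, and then identifies $f_t^{\prime}(\w_t)=\phi_t^{\prime}(\x_t^T\w_t)\,\x_t$ as an eigenvector of $(I+\gamma\eta_0\x_t\x_t^T)^{-1}$ with eigenvalue $(1+\gamma\eta_0\|\x_t\|_2^2)^{-1}$; you instead collapse everything to the scalar $s=\x_t^T(\w_t-\w_{t+1})$ by Cauchy--Schwarz and maximize a one-dimensional concave quadratic. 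The two computations give the same constant because the linear term points along $\x_t$, so restricting the maximization to that direction (which is what Cauchy--Schwarz effectively does) is lossless; your version buys the same result without matrix square roots or eigen-computations. Two further points in your favor: deriving the optimality condition from scratch sidesteps the paper's slightly inaccurate appeal to Lemma~\ref{lem:lem1} (whose stated form carries $r(\w_t)$ and a $\delta_t$ that includes $r$, neither of which appears in the Proposition's proof), and you are right that the last term of Eq.~\eqref{eq:prop} should carry $\sum_{t=1}^T$, as it implicitly does in the paper's own final summation step.
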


In Proposition \ref{prop:1}, we use a fixed step size $\eta_0$ in all the iterations. 
In the case that the data $\{\x_t\}$ is not normalized properly, it is possible that 
 we improperly set a large $\eta_0$ such that $\forall t, \eta_0\gg \frac{1}{\gamma\|\x_t\|_2^2}$, then 
\begin{equation}
R_T\lesssim \frac{1}{2\eta_0}\|\w_1-\w^*\|_2^2+r(\w_1)+\sum_{t=1}^T\frac{(\phi_t^{\prime}(z))^2|_{z=\x_t^T\w_t}}{2\gamma},\label{eq:R-prop} 
\end{equation}
where the second term of RHS in Eq. \eqref{eq:R-prop} is independent on $\eta_0$. Therefore, with the assumption that $\forall t, (\phi_t^{\prime}(z))^2|_{z=\x_t^T\w_t}$ is bounded by a constant, even if $\eta_0\rightarrow +\infty$,  we can still obtain an $O(T)$ regret. In contrast, in the COMID  algorithms, 
the regret will be 
\begin{equation}
R_T\le \frac{1}{2\eta_0}\|\w_1-\w^*\|_2^2+r(\w_1)+\sum_{t=1}^T\frac{\eta_0}{2}{\|f_t^{\prime}(\x_t^T\w)\|_2^2},\label{eq:R-prop} 
\end{equation}
when we use a fixed large step size $\eta_0$, the regret will be $O(\eta_0 T)$. Therefore by this regret analysis, the regret of COMID can not be guaranteed to be independent from $\eta_0$ and will be unbounded as $\eta_0\rightarrow +\infty$. Thus COMID may be unstable for overlarge $\eta_0$.

\section{Computation}
In this section, we consider the efficient computation methods to solve the subproblem of FIOL in each iteration. 
Particularly, we consider the empirical risk minimization problem and 
  assume that $\Omega\overset{\defi}{=} \bbR^d, B_{\psi}(\w, \w_t) \overset{\defi}{=}  \frac{1}{2}\|\w-\w_t\|_2^2$, $f_t(\w) \overset{\defi}{=} \phi_t(\x_t^T\w)$ and $l_t(\w) \overset{\defi}{=} r(\w) + \frac{1}{2\eta_t}\|\w-{\w_t}\|_2^2$, where $\phi_t:\bbR\rightarrow\bbR$ is a convex function and $l_t:\bbR^d\rightarrow \bbR$ is a strongly convex function.
To simplify the notation, we omit the subscript ``t'' and use $\tilde{\w}\overset{\defi}{=}\w_{t+1}$ and$\hat{\w}\overset{\defi}{=}\w_{t}$. 
Then we  rewrite the FIOL iteration as 
\begin{eqnarray}
\tilde{\w} = \argmin_{\w\in\bbR^d}\left\{\phi(\x^T\w)+ l(\w) \right\}. \label{eq:reformulate}
\end{eqnarray}
 Then assume that $\phi(z) \overset{\defi}{=} \sup_{\beta\in\bbR}\{z\beta-\phi^*(\beta)\}$ and $l^*(\z) \overset{\defi}{=} \sup_{\w\in\bbR^d}\{\z^T\w-l(\w)\}$, where $\phi(z)$ is the convex conjugate of $\phi^{*}(\beta)$ and $l^*(\z)$ is the convex conjugate of $l(\w)$. Then by the convex duality \cite{shalev2013stochastic}, we have 
\begin{eqnarray}
 &&\min_{\w\in\bbR^d}\left\{f(\w)+ r(\w) + \frac{1}{2\eta}\|\w-\hat{\w}\|_2^2\right\}	\nonumber\\
&=&\min_{\w\in\bbR^d}\left\{\phi(\x^T\w)+l(\w)\right\}\nonumber\\
&=&\min_{\w\in\bbR^d}\sup_{\beta\in\bbR}\left\{-\beta \x^T\w-\phi^*(-\beta)+l(\w)\right\}\nonumber\\
&=&\sup_{\beta\in\bbR}\left\{-\phi^*(-\beta)-\sup_{\w\in\bbR^d}(\beta \x^T\w-l(\w))\right\}\nonumber\\
&=&\sup_{\beta\in\bbR}\left\{-\phi^*( -\beta)-l^*(\beta\x)\right\}.\nonumber
\end{eqnarray}
Denote $\varphi(\beta) \overset{\defi}{=}\phi^*( -\beta)+l^*(\beta\x).$ It is known that
 if the optimal solution $\tilde{\beta}$ of $\min_{\beta\in\bbR} \varphi(\beta)$ is found, then the optimal solution of $\tilde{\w}$ is $\tilde{\w} =  \nabla l^*(\z)|_{\z = \tilde{\beta}\x}$. Therefore, the problem about $\w$ is converted to a finding the optimal solution of the one-dimensional problem $\min_{\beta\in\bbR} \varphi(\beta)$ about $\beta$, which is equivalent to finding the root of the derivative $\varphi^{\prime}(\beta)$. 
 It is known that $\varphi(\beta)$ is a convex function and thus $\varphi^{\prime}(\beta)$ is non-decreasing.  Then we can use the well-known bisection method to find an approximate root of the non-decreasing function $\varphi^{\prime}(\beta)$. In the bisection method, first we determine two points $\beta_1\in\bbR$ and $\beta_2\in\bbR$ such that $\varphi^{\prime}(\beta_1)\le 0$ and $\varphi^{\prime}(\beta_2)\ge 0$. Then we can use Alg. \ref{alg:bisection} to find an approximate root. 

\begin{algorithm}[htb]
   \caption{The bisection method}
   \label{alg:bisection}
\begin{algorithmic}[1]
	\STATE Find $\beta_1, \beta_2\in\bbR$ such that $\varphi^{\prime}(\beta_1)\le 0$ and $\varphi^{\prime}(\beta_2)\ge 0$
	\STATE ${\rm low} = \beta_1, {\rm high} = \beta_2$,  ${\rm mid} = ({\rm low}+{\rm high})/2$
    \WHILE{$|\varphi^{\prime}({\rm mid})|\ge \epsilon$}
    \STATE  ${\rm mid} = ({\rm low}+{\rm high})/2$
       \IF{$\varphi^{\prime}({\rm mid})>0$}
    \STATE ${\rm high } = { \rm mid}$
    \ELSE
    \STATE ${\rm low } = { \rm mid}$
    \ENDIF
  \ENDWHILE
   \STATE \textbf{return} mid
\end{algorithmic}
\end{algorithm}
To find an $\epsilon$-accurate root, the bisection method needs $O\Big(\log\big(\frac{\beta_2-\beta_1}{\epsilon}\big)\Big)$ iterations. In the online learning setting, evaluating $\varphi^{\prime}(\beta)$ has $O(d)$ cost in general. Therefore, to find an $\epsilon$-accurate root, the overall complexity of Alg. \ref{alg:bisection} is $O\Big(d\log\big(\frac{\beta_2-\beta_1}{\epsilon}\big)\Big)$.

For some more concrete settings, we can find better iterative algorithms or even closed-form solution. For example, if $\Omega \overset{\defi}{=}  \bbR^d, f(\w) \overset{\defi}{=} \phi(\x^T\w)  \overset{\defi}{=} \frac{1}{2}(y-\x^T\w)^2$ and $r(\w)=\frac{\lambda}{2}\|\w\|_2^2$, then by taking derivative of \eqref{eq:reformulate} directly and we can find 
\[
\tilde{\w} \overset{\defi}{=} \hat{\w} - \frac{\eta(\x^T\hat{\w}-(\eta+\lambda)y)}{(\eta+\lambda)(\eta\|\x\|_2^2+(\eta+\lambda))}\x.
\] 
In the following discussion, we consider to find the exact optimal solution in a setting which is widely used but does not have closed-form solution:  $\phi(z)$ is the convex loss function used in empirical risk minimization, such as the squared loss $\frac{1}{2}(y-z^2)$ ($y\in\bbR$), the hinge loss $\{1-yz, 0\}$  ($y\in\{-1, +1\}$), the logistic loss $\log (1+\exp(-yz))$ ($y\in\{-1, +1\}$) and  the exponential loss $\exp(-yz)$  ($y\in\{-1, +1\}$); meanwhile $r(\w) = \lambda\|\w\|_1$.

As shown in \citep[Section 5]{shalev2013stochastic}, we have 
\begin{align}
\!\!\!\!\!\!l^*(\beta\x) =& \frac{1}{2\eta} \sum_{i=1}^d \left(\max\{|\hat{w}_i+\eta \beta x_i|-\lambda\eta, 0\}\right)^2\nonumber\\
\nabla l^*(\z)|_{\z=\beta\x} =&  \sign(\hat{w}_i+\eta \beta x_i)\max\{|\hat{w}_i+\eta \beta x_i|-\lambda\eta, 0\}.\label{eq:nabla-l}
\end{align}
Define $g(\beta)\overset{\defi}{=} (l^*(\beta\x))^{\prime}$, then we have 
\begin{eqnarray}
g(\beta)\overset{\rm def}{=}&\sum_{i=1}^d  x_i\big(\max\{\hat{w}_i+\eta\beta  x_i-\lambda\eta, 0\}\nonumber\\
&\quad+\min\{\hat{w}_i+\eta\beta  x_i+\lambda\eta, 0\}\big).\label{eq:g-beta}
\end{eqnarray}
It is easy to verify that $g(\beta)$ is a piecewise linear function. 

Meanwhile, 
by the dual formulation $\phi^*(z)$ (see \citep[Section 5]{shalev2013stochastic}), we 
have Proposition \ref{prop:2}.
\begin{proposition}\label{prop:2}
For $C_1, C_2\in\bbR$, when $\phi(z)$ is square loss, hinge loss or other linear/quadratic loss, the exact root of $(\phi^*(-\beta))^{\prime}+C_1\beta+C_2$ can be found with $O(1)$ cost; when $\phi(z)$ is exponential loss or logistic loss, we can find a high-accuracy solutoin by Newton method in several $O(1)$ iterations. 
\end{proposition}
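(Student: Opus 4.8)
The plan is to treat the two families of losses separately, exploiting the explicit forms of the convex conjugate $\phi^*$ recorded in \citep[Section 5]{shalev2013stochastic}. First I would observe that, since $\varphi(\beta)=\phi^*(-\beta)+l^*(\beta\x)$ is convex, its derivative $\varphi'(\beta)=(\phi^*(-\beta))'+g(\beta)$ is non-decreasing, and because $g(\beta)$ is piecewise linear, on each of its finitely many linear pieces we have $g(\beta)=C_1\beta+C_2$ with slope $C_1\ge 0$. Hence solving $\varphi'(\beta)=0$ reduces, piece by piece, to solving $(\phi^*(-\beta))'+C_1\beta+C_2=0$, which is precisely the equation in the statement; monotonicity of $\varphi'$ guarantees the root is unique.

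For the quadratic/linear family (square loss, hinge loss, and similar), the key observation is that $\phi^*$ is itself piecewise quadratic (or piecewise linear) on its domain: for instance, square loss gives $\phi^*(\beta)=\tfrac12\beta^2+y\beta$, while for hinge loss $\phi^*$ is affine on a bounded interval and $+\infty$ outside. Consequently $(\phi^*(-\beta))'$ is piecewise affine in $\beta$, so on each piece the target equation becomes an affine equation whose unique root is available in closed form. Since there are only $O(1)$ pieces (coming jointly from $\phi^*$ and the active piece of $g$), I would enumerate them, solve the affine equation on each, and clip each candidate to its interval, selecting the consistent root; this costs $O(1)$ arithmetic once the active piece of $g$ is identified.

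For the exponential and logistic losses, $\phi^*$ is smooth but transcendental (an entropy-type term involving $\log$), so $(\phi^*(-\beta))'$ is nonlinear and the equation admits no closed form. Here I would invoke the smoothness and strict convexity of $\varphi$: $\varphi'$ is smooth, strictly increasing, and Lipschitz near the root, so Newton's method attains local quadratic convergence, reaching accuracy $\epsilon$ in $O(\log\log(1/\epsilon))$ steps, i.e. a handful of iterations for any practical tolerance. To make this globally valid I would run a safeguarded Newton iteration that brackets the root (possible because $\varphi'$ is monotone) and falls back to a bisection step whenever a Newton update would leave the bracket, preserving the fast asymptotic rate while guaranteeing progress from an arbitrary start.

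The main obstacle I anticipate is the bookkeeping of the piecewise/domain structure in the first case: correctly determining which affine piece of $(\phi^*(-\beta))'$ and which linear piece of $g$ contain the true root, and handling the kink and boundary points (e.g. the endpoints of the hinge-loss dual interval) so that the enumeration is simultaneously exhaustive and $O(1)$. For the second case the delicate point is not the local Newton rate but certifying convergence from an arbitrary initial $\beta$; it is exactly the monotonicity of $\varphi'$ that makes the safeguarding argument go through.
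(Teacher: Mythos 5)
Your proposal is correct, and it is worth noting that the paper itself gives no proof of this proposition at all: it is asserted with only a pointer to the dual formulations in \citep[Section 5]{shalev2013stochastic}, so your writeup supplies an argument the paper leaves implicit. Your reasoning matches the intended one: for the square loss $\phi^*(\beta)=\tfrac12\beta^2+y\beta$ makes $(\phi^*(-\beta))'$ affine, for the hinge loss $\phi^*$ is affine on a bounded interval (so the root is an affine solve plus a clip to the dual-feasible interval), and for the logistic/exponential losses the conjugate is entropy-type, so only an iterative method is available. Two places where you go beyond the paper's claim, both to the good: first, you make explicit that the constant-cost enumeration must track both the active piece of $g$ and the kinks/domain endpoints of $\phi^*$, which is precisely the bookkeeping the paper sweeps under ``$O(1)$ cost''; second, the paper's statement that Newton converges in ``several $O(1)$ iterations'' is only a local quadratic-convergence claim, and your safeguarded (bracketing plus bisection-fallback) Newton scheme, justified by monotonicity of $\varphi'$, is what actually certifies convergence from an arbitrary start while retaining the $O(\log\log(1/\epsilon))$ asymptotic rate. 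In short, your proof is sound and is strictly more careful than what the paper offers for this statement.
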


The resulted problem is to find the root of the non-decreasing function
\begin{eqnarray}
\varphi^{\prime}(\beta) =(\phi^*(-\beta))^{\prime} +g(\beta). 
\end{eqnarray}
After the optimal solution of $\tilde{\beta}$ is found, we obtain $\tilde{\w} = \nabla l^*(\z)|_{\z = \tilde{\beta}\x}$.

 In order to  find $\tilde{\beta}$, we reformulate $g(\beta)$ in Lemma \ref{lem:g-beta}.
\begin{lemma}\label{lem:g-beta}
Suppose $\u,\v\in\bbR^d$ satisfy that for all $i\in[d]$, denote $u_i \overset{\rm def}{=} -\frac{1}{\eta x_i}({\hat{w}_i- {{\rm sign}(x_i)} \lambda\eta})$, $ v_i\overset{\rm def}{=}-\frac{1}{\eta  x_i}({\hat{w}_i+{{\rm sign}(x_i)} \lambda\eta})$. Denote $\boldsymbol{\mu} \overset{\rm def}{=}[\u^T, \v^T]^T\in\bbR^{2d}, \z\overset{\rm def}{=}[(\x\odot\x)^T, -(\x\odot\x)^T]^T\in\mathbb{R}^{2d}$. Then we can rewrite $g(\beta)$ as follows
\begin{align}
\!\!g(\beta)
= \eta\sum_{i=1}^{2d}  z_i\max\{\beta-\mu_i,0\} + \eta\sum_{i=1}^dx_i^2(\beta-v_i).
 \label{eq:max-linear}
\end{align}
\end{lemma}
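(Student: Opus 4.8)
The plan is to prove the identity coordinatewise. Both $g(\beta)$ in \eqref{eq:g-beta} and the target expression \eqref{eq:max-linear} are finite sums indexed by the coordinates, so I would fix an index $i$ (coordinates with $x_i=0$ contribute nothing to either side, since then $x_i^2=0$ and the corresponding summand of $g$ vanishes) and show that the $i$-th summand of $g$, namely $x_i\big(\max\{\hat{w}_i+\eta\beta x_i-\lambda\eta,0\}+\min\{\hat{w}_i+\eta\beta x_i+\lambda\eta,0\}\big)$, produces exactly the three pieces $\eta z_i\max\{\beta-\mu_i,0\}$, $\eta z_{d+i}\max\{\beta-\mu_{d+i},0\}$ and $\eta x_i^2(\beta-v_i)$ attached to this coordinate.

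The first step is to rewrite the two affine arguments inside the $\max$ and $\min$ in the factored form $\eta x_i(\beta-\,\cdot\,)$. From the definitions $u_i=-\frac{1}{\eta x_i}(\hat{w}_i-\sign(x_i)\lambda\eta)$ and $v_i=-\frac{1}{\eta x_i}(\hat{w}_i+\sign(x_i)\lambda\eta)$, a direct computation gives $\eta x_i(\beta-u_i)=\hat{w}_i+\eta\beta x_i-\sign(x_i)\lambda\eta$ and $\eta x_i(\beta-v_i)=\hat{w}_i+\eta\beta x_i+\sign(x_i)\lambda\eta$. The crucial point is that whether $u_i$ pairs with the argument carrying $-\lambda\eta$ or the one carrying $+\lambda\eta$ is controlled by $\sign(x_i)$, which forces a case split on the sign of $x_i$.

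I would then carry out that case analysis. When $x_i>0$ the factor $\eta x_i$ is positive and pulls straight out of the $\max$ and $\min$, giving $\eta x_i^2\max\{\beta-u_i,0\}+\eta x_i^2\min\{\beta-v_i,0\}$. When $x_i<0$ the factor $\eta x_i$ is negative, and here I would invoke the elementary identities $\max\{ta,0\}=t\min\{a,0\}$ and $\min\{ta,0\}=t\max\{a,0\}$, valid for $t<0$; combined with the now-swapped roles of $u_i$ and $v_i$ from the previous step, this yields the \emph{same} expression $\eta x_i^2\max\{\beta-u_i,0\}+\eta x_i^2\min\{\beta-v_i,0\}$. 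Finally, rewriting the min via $\min\{a,0\}=a-\max\{a,0\}$ converts this into $\eta x_i^2\max\{\beta-u_i,0\}-\eta x_i^2\max\{\beta-v_i,0\}+\eta x_i^2(\beta-v_i)$, which I would match term by term against $\eta z_i\max\{\beta-\mu_i,0\}+\eta z_{d+i}\max\{\beta-\mu_{d+i},0\}+\eta x_i^2(\beta-v_i)$ using $z_i=x_i^2$, $z_{d+i}=-x_i^2$, $\mu_i=u_i$, $\mu_{d+i}=v_i$. Summing over $i\in[d]$ then collects the $2d$ hinge terms and the $d$ linear terms into \eqref{eq:max-linear}.

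The main obstacle is the sign bookkeeping in the $x_i<0$ branch: two independent sign flips must conspire to cancel, namely the one coming from $\sign(x_i)$ in the definitions of $u_i,v_i$ (which swaps which argument is which) and the one coming from pulling the negative factor $\eta x_i$ through the $\max$/$\min$ (which swaps $\max$ and $\min$). Tracking these carefully so that the $x_i<0$ case reduces to the \emph{same} canonical expression as the $x_i>0$ case is the only delicate point; the rest is routine algebra.
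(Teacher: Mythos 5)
Your proof is correct and follows essentially the same route as the paper's: both rewrite each summand as $\eta x_i^2\left(\max\{\beta-u_i,0\}+\min\{\beta-v_i,0\}\right)$ and then convert the $\min$ into a $\max$ plus a linear term, matching against the definitions of $\boldsymbol{\mu}$ and $\z$. If anything, your explicit case split on $\sign(x_i)$ (and the $x_i=0$ coordinates) fills in exactly the detail the paper compresses into ``by the definition of $\u$ and $\v$,'' and your identity $\min\{a,0\}=a-\max\{a,0\}$ is just the pointwise form of the paper's index-set manipulation.
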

\begin{proof}[Proof of Lemma \ref{lem:g-beta}]
It follows that
\begin{eqnarray}
g(\beta) &\overset{\circlenum{1}}{=}& \eta\sum_{i=1}^d x_i^2\left(\max\left\{\beta-u_i,0\right\} + \min\left\{\beta-v_i,0\right\}\right)\nonumber\\
&=& \eta \left( \sum_{i: u_i< \beta}x_i^2(\beta-u_i) + \sum_{i:v_i\ge\beta}x_i^2(\beta-v_i)\right)\nonumber\\
&=&\eta\left(\sum_{i: u_i< \beta}x_i^2(\beta-u_i)  - \sum_{i:v_i<\beta}x_i^2(\beta-v_i)\right)\nonumber\\
&&+ \eta\sum_{i=1}^dx_i^2(\beta-v_i)\nonumber\\
&\overset{\circlenum{2}}{=}&\eta\sum_{i\in[2d]:\mu_i<\beta}  z_i(\beta-\mu_i) + \eta\sum_{i=1}^dx_i^2(\beta-v_i).\nonumber\\
&=& \eta\sum_{i=1}^{2d}  z_i\max\{\beta-\mu_i,0\} + \eta\sum_{i=1}^dx_i^2(\beta-v_i),\nonumber
\end{eqnarray}
where \circlenum{1} is by the definition of $\u$ and $\v$, \circlenum{2} is by the definition of $\boldsymbol{\mu}$ and $\z$. 
\end{proof}
 By \eqref{eq:max-linear}, $g(\beta)$ can be reduced to the sum of the max operators of $\beta$ plus a linear function $w.r.t.$ $\beta$.  If we know the relationship of the solution $\tilde{\beta}$ and $\mu_i (i\in[2d])$ beforehand, then $g(\beta)$ will be a linear segment and thus $\tilde{\beta}$ 
then we get a much simpler problem, which can be solved efficiently by Proposition \ref{prop:2}. Therefore, the remaining task is to determine the relationship between $\tilde{\beta}$ and $\mu_i$. In this section, we provide two kinds of algorithms: one is based on sorting; the other is based on partition.

\subsection{The sorting-based algorithm}
First we sort $\boldsymbol{\mu}$ such that $\mu_{k_1}\le\mu_{k_2}\le\cdots\le\mu_{k_{2d}}$, where $k_1,k_2,\ldots, k_{2d}$ is a permutation of $[2d]$. In addition, set $\mu_{k_0}\overset{\rm def}{=} -\infty$ and $\mu_{k_{2d+1}}\overset{\rm def}{=} +\infty$.
Then for $j\in[2d+1]$, if $\mu_{k_{j-1}}\le\beta<\mu_{k_{j}}$, then by Eq. \eqref{eq:max-linear}, we have
\begin{equation}
g(\beta) =\eta\sum_{l=1}^{j-1}z_{k_l}(\beta-\mu_{k_{l}})+ \eta\sum_{i=1}^dx_i^2(\beta-v_i), \label{eq:sort-max-linear}
\end{equation} 
which means that if we restrict $\beta$ in $\mu_{k_{j-1}}\le\beta<\mu_{k_{j}}$, then $g(\beta)$ is a linear segment. For $j\in[2d+1]$, if we compute the linear coefficients of the linear segment $g(\beta)(\mu_{k_{j-1}}\le\beta<\mu_{k_{j}})$ orderly from $j=1$ to $ 2d+1$, then we can compute all the coefficients in $O(d)$ time. If the linear coefficients of the linear segment $g(\beta)(\mu_{k_{j-1}}\le\beta<\mu_{k_{j}})$ is computed, then we can evaluate $\varphi(\beta)(\mu_{k_{j-1}}\le\beta<\mu_{k_{j}})$ in $O(1)$ time. Meanwhile if $\mu_{k_{\rho-1}}\le\tilde{\beta}\le\mu_{k_{\rho}}$, by the non-decreasing property, it must be 
\begin{eqnarray}
\varphi^{\prime}(\mu_{k_{\rho-1}})\le 0 \text{ and } 	\varphi^{\prime}(\mu_{k_{\rho}})\ge 0. \label{eq:rho}
\end{eqnarray}

Equivalently we have Lemma \ref{lem:iden}.
\begin{lemma}\label{lem:iden}
Let $\tilde{\beta}$ be the optimal solution. Let $\boldsymbol{\mu}$ and $\z$ be defined in Lemma \ref{lem:g-beta}. Let $\{k_1, k_2,\ldots, k_{2d}\}$ be the permutation of $[2d]$ such that $\mu_{k_1}\le\mu_{k_2}\le\cdots\le\mu_{k_{2d}}$ and set $\mu_{2d+1} = +\infty$. Denote $p_1\overset{\rm def}{=} \sum_{i=1}^d x_i^2, q_1\overset{\rm def}{=}\sum_{i=1}^d x_i^2 v_i$. 
Then we can find 
\begin{align}
\rho\overset{\rm def}{=}&\min\Bigg\{j\in[2d+1]: (\phi^*(-\mu_{k_j}))^{\prime}\nonumber\\
&+\eta\left(p_1+\sum_{l=1}^{j-1} z_{k_l}\right) \mu_{k_j}- \eta\left(q_1+\sum_{l=1}^{j-1} z_{k_l} \mu_{k_l}\right) \ge 0\Bigg\}\label{eq:iden}
\end{align}
such that $\tilde{\beta}$ is the solution of the equation
\begin{equation}
 (\phi^*(-\beta))^{\prime}+\eta\left(p_1+\sum_{l=1}^{\rho-1} z_{k_l}\right) \beta- \eta\left(q_1+\sum_{l=1}^{\rho-1} z_{k_l} \mu_{k_l}\right)=0. \label{eq:tilde-beta}	
\end{equation}

\end{lemma}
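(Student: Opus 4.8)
The plan is to identify $\tilde\beta$ by determining which linear segment of the piecewise-linear map $\beta\mapsto\varphi'(\beta)=(\phi^*(-\beta))'+g(\beta)$ contains its root, and then solving the affine equation that governs that single segment. I would begin by recording the two structural facts that drive everything. First, since $\varphi(\beta)=\phi^*(-\beta)+l^*(\beta\x)$ is convex, its derivative $\varphi'$ is non-decreasing and continuous, so the optimal $\tilde\beta$ is characterized by $\varphi'(\tilde\beta)=0$. Second, by \eqref{eq:max-linear} the term $g$ is piecewise linear with breakpoints exactly at the values $\mu_i$, so after sorting as $\mu_{k_1}\le\cdots\le\mu_{k_{2d}}$ the slope and intercept of $g$ change only when $\beta$ crosses some $\mu_{k_j}$; on the interval $[\mu_{k_{j-1}},\mu_{k_j})$ the affine form is exactly \eqref{eq:sort-max-linear}.

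Next I would evaluate $\varphi'$ at a breakpoint $\beta=\mu_{k_j}$. Substituting into \eqref{eq:max-linear}, every hinge term $z_{k_l}\max\{\mu_{k_j}-\mu_{k_l},0\}$ with $l\ge j$ vanishes — including the $l=j$ term and any ties, since their argument is nonpositive — while each term with $l<j$ equals $z_{k_l}(\mu_{k_j}-\mu_{k_l})$. Collecting the linear part $\eta\sum_i x_i^2(\beta-v_i)$ via $p_1=\sum_i x_i^2$ and $q_1=\sum_i x_i^2 v_i$ shows that the bracketed quantity in \eqref{eq:iden} is precisely $\varphi'(\mu_{k_j})$. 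Hence $\rho$ is simply the smallest index $j$ with $\varphi'(\mu_{k_j})\ge 0$. I would also note that $\rho$ is well defined: because $\sum_{i=1}^{2d}z_i=0$ by construction of $\z$, the slope on the last segment is the strictly positive $\eta p_1$, so $\varphi'(\beta)\to+\infty$ and the defining set is nonempty.

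With that identification, monotonicity of $\varphi'$ finishes the argument. By minimality of $\rho$ we get $\varphi'(\mu_{k_{\rho-1}})<0\le\varphi'(\mu_{k_\rho})$, which is exactly \eqref{eq:rho}; since $\varphi'$ is non-decreasing and continuous, the root therefore obeys $\mu_{k_{\rho-1}}\le\tilde\beta\le\mu_{k_\rho}$. On this closed interval the breakpoints are resolved once and for all — $\max\{\beta-\mu_{k_l},0\}=\beta-\mu_{k_l}$ for $l<\rho$ and vanishes for $l\ge\rho$ — so $g$ coincides there with the single affine function of \eqref{eq:sort-max-linear} with $j=\rho$. Imposing $\varphi'(\tilde\beta)=0$ with this affine form yields exactly \eqref{eq:tilde-beta}.

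The delicate point, and the step I expect to require the most care, is the boundary bookkeeping. Because the sorted values may tie and because $\tilde\beta$ may coincide with the endpoint $\mu_{k_\rho}$, I must verify that the affine formula for segment $\rho$ stays valid up to and including both endpoints. This hinges on the elementary observation that the $k_j$-th hinge term is zero exactly at $\beta=\mu_{k_j}$, which makes the half-open-versus-closed distinction harmless and lets the single equation \eqref{eq:tilde-beta} capture the root whether $\tilde\beta$ lands in the interior of the segment or at its right endpoint.
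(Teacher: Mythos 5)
Your proof is correct and takes essentially the same route as the paper's: both identify $\rho$ via monotonicity of $\varphi^{\prime}$ as the smallest sorted breakpoint index with $\varphi^{\prime}(\mu_{k_j})\ge 0$, expand $\varphi^{\prime}$ at the breakpoints using the piecewise-linear form \eqref{eq:sort-max-linear} together with $p_1,q_1$ to obtain \eqref{eq:iden}, and then solve the affine equation on the segment containing the root to get \eqref{eq:tilde-beta}. The paper's proof is simply a terser version of yours; your extra care about well-definedness of $\rho$, ties, and endpoint cases fills in details the paper leaves implicit.
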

\begin{proof}
Because $\varphi(\beta)$ is a non-decreasing function,  the $\rho\in[2d+1]$ that satisfies Eq. \eqref{eq:rho} is equivalent to 
\begin{eqnarray}
\rho\overset{\rm def}{=}\min\left\{j\in[2d+1]:\varphi^{\prime}(\mu_{k_{\rho}})\ge0\right\}.\nonumber
\end{eqnarray} 
Then by the formulation of $g(\beta)$ in Eq. \eqref{eq:sort-max-linear} and the definitions of $p_1$ and $q_1$, we get Eq. \eqref{eq:iden}.
\end{proof}

Based on Lemma \ref{lem:iden}, we give Alg. \ref{alg:pa-1}.  

 % However if we use a divide and conquer strategy to replace the sort operation, a more efficient algorithm can be obtained.

\begin{algorithm}[!ht]
    \caption{The sort-based algorithm for the FIOL problem in Eq. \eqref{eq:reformulate}}
  \begin{algorithmic}[1]\label{alg:pa-1} %一行一个标行号 
  \STATE Input: $\boldsymbol{\mu}, \z$ defined in Lemma \ref{lem:g-beta}  and three scalars $\eta>0$, $p_1\overset{\rm def}{=} \sum_{i=1}^d x_i^2, q_1\overset{\rm def}{=}\sum_{i=1}^d x_i^2 v_i$
  \STATE Sort $\boldsymbol{\mu}$ such that $\mu_{k_1}\le\mu_{k_2}\le\cdots\le\mu_{k_{2d}}$, where $\{k_1, k_2, \ldots, k_{2d}\}$ is a permutation of $[2d]$; set $\mu_{k_{2d+1}}\overset{\rm def}{=}+\infty$
  % \STATE $p_1=\eta\sum_{i=1}^d x_i^2, q_1= -\eta\sum_{i=1}^d x_i^2 v_i$
  % \STATE Define $p_j= p_1+\eta\sum_{k\in[j]}\;z_{i_k}^2$
  % \STATE Define $q_j=q_1+\eta\sum_{k\in[j]}\;z_{i_k}^2 \mu_{i_k}^2$
  % \STATE Find $\rho\overset{\rm def}{=}\min\Big\{j\in[2d+1]: \left(p_1+\sum_{l=1}^{j-1} z_{k_l}\right) \mu_{k_j}$\\
  % $\quad\quad\quad- \left(q_1+\sum_{l=1}^{j-1} z_{k_l}\mu_{k_l}\right) > h( \mu_{k_j})/\eta\Big\} $
  \STATE Find $\rho$ by Eq. \eqref{eq:iden}
  \STATE Set $\tilde{\beta} $ as the solution of Eq. \eqref{eq:tilde-beta}
 % $
      % =\left(1/\eta+q_1+\sum_{k=1}^{\rho-1} z_{i_k} \mu_{i_k}\right)/\left(p_1+\sum_{k=1}^{\rho-1} z_{i_k}\right)$
  \STATE $\tilde{\w} =  \nabla l^*(\z)|_{\z = \tilde{\beta}\x}$ by Eq. \eqref{eq:nabla-l}
    \end{algorithmic}  
\end{algorithm}

Because a sorting operation on the vector $\boldsymbol{\mu}$ exists, Alg. \ref{alg:pa-1} has $O(d\log d)$ complexity.

\subsection{The partition-based algorithm}
According to Lemma \ref{lem:iden}, the problem to find the optimal solution $\tilde{\beta}$ is equivalent to finding the $\rho$-smallest element of $\boldsymbol{\mu}$. 
Finding the $\rho$-smallest element in a sequence is a well-known problem \cite{cormen2009introduction}, which can be solved with the $O(d)$ linear time by the randomized median algorithm \citep[\S 9]{cormen2009introduction}. Motivated by the randomized median algorithm and its variant \cite{duchi2008efficient}, in this section we propose Alg. \ref{alg:randomized-pa} to find $\tilde{\beta}$ and $\tilde{\w}$ with $O(d)$ expected time. 

In Alg. \ref{alg:randomized-pa}, we use a divide and conquer strategy to replace the sort iteration. In each iteration, according to the value of  $(\phi^*(-\mu_{k}))^{\prime}+\eta(p+\Delta p)\mu_k-\eta(q+\Delta q)$ we will determine whether to update the value of $p$ and $q$, and the set $U$ will be reduced to its subset $G$ or $L$ until $U=\emptyset$. After the loop terminates, we can obtain $\tilde{\beta}$ by finding the root of the equation in the step 12 and output $\tilde{\w}$ by Eq. \eqref{eq:nabla-l}.

To show the correctness, first we notice that after each iteration, the index $k$ of the anchor point will be removed from $U$, and thus the cardinality of $U$ will be reduced by at least $1$. Therefore, by at most $2d+2$ iterations, the loop will stop. 

Meanwhile, if we sort $\boldsymbol{\mu}$ such that $\mu_{k_1}\le \mu_{k_2}\le\cdots\le \mu_{k_{2d}}$ and set $\mu_{k_0}\overset{\defi}{=}-\infty$ and $\mu_{k_{2d+1}}\overset{\defi}{=}+\infty$,  then there must exist $i^*\in[2d+1]$ such that the optimal solution $\tilde{\beta}$ satisfies $\mu_{k_{i^*-1}}\le \tilde{\beta}\le \mu_{k_{i^*}}$. 
Then on the one hand, we reduce the cardinality of the set $U$ by the divide and conquer strategy. On the other hand, we aim to keep the following loop invariant: 
\begin{align}
h(\beta)\overset{\defi}{=}(\phi^*(-\beta))^{\prime}+\sum_{k\in {U}}z_k\max\{ \beta - \mu_k, 0\}+\eta p\beta - \eta q,  \label{eq:general}
\end{align} 
satisfies the two conditions
\begin{itemize}
\item (condition 1): $\tilde{\beta}\in [\min_{k\in {U}} \mu_k, \max_{k\in {U}} \mu_k]$ or $\mu_{k_{i^*+1}} = \min_{k\in U} \mu_k$ or $\mu_{k_{i^*}} = \max_{k\in U} \mu_k$
\item (condition 2): $h(\beta) = \varphi^{\prime}(\beta)$ if $\beta\in [\min_{k\in {U}} \mu_k, \max_{k\in {U}} \mu_k]$
\end{itemize} 
until $U=\emptyset$. After $U=\emptyset$, we can find $\tilde{\beta}$ in the step 12.

In the initialization step of Alg. \ref{alg:randomized-pa}, we initialize $U=[2d]$, $p=\sum_{i=1}^d x_i^2$, $ q= \sum_{i=1}^d x_i^2 v_i$.
Then on the one hand, because $[\mu_0 ,\min_{k\in U} \mu_k]\cup [\min_{k\in {U}} \mu_k, \max_{k\in {U}} \mu_k]\cup [\max_{k\in U} \mu_k, \mu_{2d+1}]=\bbR$, the (condition 1) is true trivially. By the definition of $p$ and $q$,  the (condition 2) is true trivially.

Then assume that before an iteration ($i.e.,$ after the previous iteration), the loop invariant holds.  
By the induction assumption, and the definition of $\Delta p$ and $\Delta q$, we have 
\begin{eqnarray}
&&\varphi(\mu_k) = h(\mu_k) \nonumber\\
&=&(\phi^*(-\mu_{k}))^{\prime}+\sum_{i\in L}z_i(\mu_k-\mu_i) + \eta p\mu_k -\eta q \nonumber\\
&=& (\phi^*(-\mu_{k}))^{\prime}+ \eta (p+\Delta p)\mu_k -\eta (q+\Delta q).
\end{eqnarray}
 Therefore in the step 5 of Alg. \ref{alg:randomized-pa}, if $\varphi(\mu_k) = (\phi^*(-\mu_{k}))^{\prime}+ \eta p\mu_k -\eta q < 0$, because $\varphi(\beta)$ is non-decreasing and we assume $\mu_{k_{i^*}}\le \tilde{\beta}\le \mu_{k_{i^*+1}}$, we have $\mu_k< \tilde{\beta}\le\mu_{k_{i^*+1}}$.  

\begin{itemize}
\item If $\tilde{\beta}\in [\min_{k\in {U}} \mu_k, \max_{k\in {U}} \mu_k]$, then we have $k_{i^*}, k_{i^*+1}\in U$. By the definition of $G$ and the condition $\mu_k< \tilde{\beta}\le\mu_{k_{i^*+1}}$, there must be $k_{i^*+1}\in G$. Therefore we have $\mu_k<\tilde{\beta}\le \max_{k\in {G}} \mu_k$, $i.e,$ $\tilde{\beta}\in [\min_{k\in {G}} \mu_k, \max_{k\in {G}} \mu_k].$

\item If $\mu_{k_{i^*+1}} = \min_{k\in U} \mu_k$, there must be $\forall k\in U, \phi(\mu_k)\ge 0$, which contradicts with our assumption that $\varphi(\mu_k)<0$. 

\item If $\mu_{k_{i^*}} = \max_{k\in U} \mu_k$, then if $G\neq \emptyset$, then by the definition of $G$ and the assumption $\mu_{k_{i^*}} = \max_{k\in U} \mu_k$, we have ${k_{i^*}} \in  G\backslash\{k\},$  there must be $\mu_{k_{i^*}}\in G$. When $G=\emptyset$,  after the iteration $U\leftarrow G\backslash\{k\}=\emptyset$, the loop stops; When $|G|\ge 2$,  

\end{itemize}
Meanwhile, for $\beta\in [\min_{k\in {G}} \mu_k, \max_{k\in {G}} \mu_k]\subset [\min_{k\in {U}} \mu_k, \max_{k\in {U}} \mu_k]$, 
\begin{eqnarray*}
\varphi(\beta)=h(\beta) &=& (\phi^*(-\beta))^{\prime}+\sum_{k\in {U}}z_k\max\{\beta - \mu_k , 0\}]\\
&&+\eta p \beta  - \eta q\\
&=&(\phi^*(-\beta))^{\prime}+ \sum_{k\in {L}}z_k\max\{\beta - \mu_k , 0\}\nonumber\\
&&+  \sum_{k\in {G}}z_k\max\{\beta - \mu_k , 0\}  +\eta p\beta - q \\
&=&(\phi^*(-\beta))^{\prime}+\sum_{k\in {G}}z_k\max\{\beta - \mu_k , 0\}\nonumber\\
 &&+ (p+\Delta p) - (q+\Delta q)\lambda.
\end{eqnarray*}

Based on the above analysis,   if $\varphi(\mu_k) = (\phi^*(-\mu_{k}))^{\prime}+ \eta p\mu_k -\eta q < 0$, by setting $p = p + \Delta p; q = q+\Delta q; U\leftarrow G\backslash\{k\}$, the loop invariant can still be true.

For the case  $(\phi^*(-\mu_{k}))^{\prime}+\eta (p+\Delta p)\mu_k - (q+\Delta q) \ge 0$, by a similar analysis,  after the update step 10, the loop invariant can still be satisfied.

By the (condition 1) and (condition 2) and the definition of $h(\beta)$, after $U=\emptyset$, we have $\varphi(\beta) =h(\beta) = (\phi^*(-\mu_{k}))^{\prime}+\eta p \beta  -\eta q$ and therefore we can find $\tilde{\beta}$ by finding the root of the equation $(\phi^*(-\mu_{k}))^{\prime}+\eta p \beta  -\eta q$.

 % When $\lambda \in L$, $h(\lambda)$ can be reformulated as $h(\lambda)=\sum_{i\in L}w_i\max\{z_i-\lambda,0\} + (p+\Delta p) - (q+\Delta q)\lambda$. Therefore after the update  in the step 6 of Alg. \ref{alg:v1}, the loop invariant can still be satisfied. 

By keeping the partial sum by $p$ and $q$, the iteration cost of Alg. \ref{alg:randomized-pa}  is $O(|U|)$. As shown in \cite{cormen2009introduction}, combined with the randomized pivot strategy, by \citep[\S 9]{cormen2009introduction} it has the expected linear time complexity $O(d)$.

\begin{remark}
If we use the median of medians strategy \cite{cormen2009introduction} to replace the randomized pivot strategy, the worst complexity of Alg. \ref{alg:randomized-pa}  will be $O(d)$. However, its empirical performance is often worse than that of the randomized pivot strategy.
\end{remark}

\begin{algorithm}[!ht]
    \caption{The partition-based algorithm for the FIOL problem in Eq. \eqref{eq:reformulate}}
\begin{algorithmic}[1]\label{alg:randomized-pa}%一行一个标行号 
   \STATE Input: $\boldsymbol{\mu}$ and $\z$ defined in Lemma \ref{lem:g-beta}  and a scalar $\eta>0$ and set $ \mu_{2d+1} = +\infty$
   \STATE $p=\sum_{i=1}^d x_i^2, q= \sum_{i=1}^d x_i^2 v_i$; $U=[2d+1]$
    % \STATE $\hat{i}=\argmin_{i} {h_{i,J_i+1}}; \hat{k}=J_{\hat{i}}+1$
    % \STATE For all $i\in V$, set $K_i{=}J_i+1, K_i^{\prime}{=}J_i^{\prime}$
    \vspace{0.02in}
    % \STATE Set $U= V$
    \WHILE{$U\neq \emptyset$}
    \STATE Pick $k\in U$
    % \STATE Select $\hat{i}\in U$ randomly and then select ${k}_{\hat{i}}\in[K_i^1, K_i^{2}]$ randomly
    % \STATE Set $Q=\{i\in U: h_{\hat{i},\hat{k}}\in W_{i,K_i^1+1}^2  \}$
    \STATE Partition $U$:
 $\quad \quad L=\{j\in U|\mu_j\le\mu_k \}; \quad \quad G=\{j\in U|\mu_j> \mu_k\}$ 
   	% \STATE $$ 
   	\STATE Calculate $\Delta p = \sum_{j\in L}z_j;\quad$
   	 $ \Delta q = \sum_{j\in L}z_j \mu_j$
    % \STATE $ p = p +\sum_{i\in Q} (\mu_{i, k_i}-\mu_{i, \hat{J}_i}); q = q+ ;\hat{\beta}=-q/p $ 
    \IF{$(\phi^*(-\mu_{k}))^{\prime}+\eta(p+\Delta p)\mu_k-\eta(q+\Delta q) < 0$}
    \STATE $p = p + \Delta p; q = q+\Delta q; U\leftarrow G$
    \ELSE
    \STATE	$U\leftarrow L\backslash\{k\}$
    \ENDIF
    \STATE Set $\tilde{\beta}$ as the solution of $(\phi^*(-\beta))^{\prime}+  \eta p\beta-\eta q=0$
    \ENDWHILE
    \STATE Output $\tilde{\w} =  \nabla l^*(\z)|_{\z = \tilde{\beta}\x}$ by Eq. \eqref{eq:nabla-l}
   % \STATE $\beta = \max\{\min\{\beta, E\}, D\}$
\end{algorithmic}  
\end{algorithm}

\section{Experiments}
\begin{table*}[th]
\centering
\caption{The best step size, the corresponding function value and sparsity}
\begin{tabular}{c|ccc|cccc}
\hline
Correlation $\rho$&\multicolumn{3}{c|}{0}&\multicolumn{3}{c}{0.5}\\
\hline
Alg& Step size   & Value &  Sparsity &  Step size  &    Value &   Sparsity    \\
\hline
SGD& $10^{-6}$ & 0.4313  &  0  & $10^{-9}$ & 0.7863 &  0 &      \\
COMID& $10^{-6}$  & 0.3964  & 100  & $10^{-9}$  & 0.7680  & 0 & \\
I-SGD& $10^{-5}$ &  0.1948 & 0  &  $10^{-4}$  & 0.163  &  0 & \\
Alg. \ref{alg:pa-1} & $10^{-4}$ & 0.3696  &  61 &  $10^{-4}$  &  0.4079  & 33\\
Alg. \ref{alg:randomized-pa}& $10^{-4}$ & 0.3049 & 111  &   $10^{-4}$  & 0.313   & 100\\
\hline
\end{tabular}\label{tb:3}
\end{table*}
In the section, to show the speed, stability and the sparsity  of solution, we compare $4$ methods: stochastic subgradient descent (SGD), online composite mirror descent (COMID), implicit SGD (I-SGD)  and the full implicit online learning in Eq. \eqref{eq:double-implicit} of this paper. Alg.  \ref{alg:pa-1} and Alg. \ref{alg:randomized-pa} are used to solve Eq. \eqref{eq:double-implicit}.  
In this experiment we solve the lasso problem 
\begin{eqnarray}
 \min_{\w\in\bbR^d}\mathbb{E}\left[1/2(\a^T\w-b)^2\right]+\lambda\|\w\|_1 \label{eq:lasso}
\end{eqnarray}
 in the online setting, where $\a$ is the sample vector, $b$ is the prediction value. 
In order to show the performance under data with different quality, following \cite{tran2015stochastic}, we use synthetic data and control the correlation coefficient betwee features. In the $t$-the iteration, a sample vector $\a_t\in\bbR^d$ is generated, where $a_{tj} = c_{tj} + \delta d_t$ with $c_{tj}\sim \mathcal{N}(0,1), d_t \sim \mathcal{N}(0,1)$ and $\delta$ is a constant. Then the correlation coefficient between $a_{i,j}$ and $a_{i,j^{\prime}}$ $(j\neq j^{\prime})$ is $\rho= \delta^2/(1+\delta^2)$. 
The prediction $b_t$ of the $t$-th iteration is defined as $b_t=\a_t^T \tilde{\w}+ \tau \epsilon_t$, where $\tilde{\w}_j=(-1)^j\exp(-2(j-1)/20)$ so that the elements of the true parameters have alternating signs and are exponentially decreasing, the noise $\epsilon_t\sim\mathcal{N}(0,1)$ and $\tau$ is chosen to control the signal-to-noise ratio.  For the $5$ algorithms, the step size is tuned over $\{10^{-10}, 10^{-9},\ldots, 10^2\}$.
We implement the $5$ algorithms in a common framework and use them to solve Eq. \eqref{eq:lasso} in the online fashion.

In this experiments, we set $d = 1000,  \tau = 0.2, \lambda=0.1, \w_1=\0$ and run all the algorithms in a fixed time under the setting $\rho = 0$ and $\rho = 0.5$.
Then the result is given in Table \ref{tb:3}.

In Table \ref{tb:3}, the column \emph{Step size} denotes the step size which makes the largest reduction of the objective function; the column \emph{Value} denote the value of objective function $\frac{1}{2N}\sum_{t=1}^N(\a_t^T\w_t-b_t)^2+\lambda\|\w_t\|_1$, where $N$ is the number of iterations; the column \emph{Sparsity} denote the number of zero elements of the solution in the last iteration.

In Table \ref{tb:3}, it is shown that the correlation between the feature vectors have large impact on the explicit update algorithm SGD and COMID which linearizes the loss function.  While the algorithms such as I-SGD, Alg. \ref{alg:pa-1} and Alg. \ref{alg:randomized-pa}, which performs implicit update for loss function, are robust for the correlation coefficient $\rho$. Because implicit update can be viewed as explicit update with data adaptive step size \cite{kulis2010implicit}, it is more robust for the scale of data and has better numerical stability.

  Meanwhile, both SGD and I-SGD linearize the regularization term $\lambda\|\w\|_1$ and thus cannot induce sparsity of solution effectively. While COMID, Alg. \ref{alg:pa-1}and Alg. \ref{alg:randomized-pa} perform implicit update for the regularization term $\lambda\|\w\|_1$. From the computational perspective, implicit update $w.r.t.$ $\lambda\|\w\|_1$  corresponds the update by soft thresholding operator, which can shrink small elements to $0$. Therefore, the $3$ algorithms have sparsity inducing effect. While it is observed that when $\rho=0.5$ and  COMID becomes unstable, it can not induce sparsity effectively. 

  Finally, under the same runtime, they can result in larger reduction of objection function than  Alg. \ref{alg:pa-1}and Alg. \ref{alg:randomized-pa} , although the iterative solving method employed by  Alg. \ref{alg:pa-1}and Alg. \ref{alg:randomized-pa}  are slower than the closed-form update of SGD and COMID. This is because that implicit update $w.r.t.$ to the loss function allows us to use a larger step size.

While because  Alg. \ref{alg:pa-1}and Alg. \ref{alg:randomized-pa}  and I-SGD can use the same step size and the closed-form update of I-SGD is faster, under the same run time, I-SGD can get a larger reduction of objection function.  
However, it should be noted that first,  to the best of our knowledge, the proposed   Alg. \ref{alg:pa-1}and Alg. \ref{alg:randomized-pa}  algorithms are the first attempts to solve the full implicit online learning problem in Eq. \eqref{eq:double-implicit} efficiently; second compared to I-SGD,  Alg. \ref{alg:pa-1}and Alg. \ref{alg:randomized-pa}  can induce sparsity effectively.

\section{Conclusion}
In this paper, we mainly study an online algorithm which perform exact minimization ($i.e,$ implicit update) for both loss function and regularizer. By performing implicit update, it avoids the approximation error of linearization, keeps the numerical stability when the step size is properly set to a large value, and exploits the structure of regularizer to obtain a structure solution. The regret bound analyses are given in given for FIOL. Meanwhile, we propose efficient computational algorithms to solve the nontrivial subproblem of FIOL, while these computational algorithms are only suitable for the empirical risk minimization (ERM) problem.  In the future, we will explore more efficient computational algorithms for the problems beyond the ERM problem. 
\vskip 0.2in
\bibliographystyle{alpha}
\bibliography{fiol.bib}

\newcommand{\etalchar}[1]{$^{#1}$}
\begin{thebibliography}{DSSST10}

\bibitem[BT03]{beck2003mirror}
Amir Beck and Marc Teboulle.
\newblock Mirror descent and nonlinear projected subgradient methods for convex
  optimization.
\newblock {\em Operations Research Letters}, 31(3):167--175, 2003.

\bibitem[CDK{\etalchar{+}}06]{crammer2006online}
Koby Crammer, Ofer Dekel, Joseph Keshet, Shai Shalev-Shwartz, and Yoram Singer.
\newblock Online passive-aggressive algorithms.
\newblock {\em Journal of Machine Learning Research}, 7(Mar):551--585, 2006.

\bibitem[CLP12]{chen2012optimal}
Xi~Chen, Qihang Lin, and Javier Pena.
\newblock Optimal regularized dual averaging methods for stochastic
  optimization.
\newblock In {\em Advances in Neural Information Processing Systems}, pages
  395--403, 2012.

\bibitem[Cor09]{cormen2009introduction}
Thomas~H Cormen.
\newblock {\em Introduction to algorithms}.
\newblock MIT press, 2009.

\bibitem[DCP08]{dredze2008confidence}
Mark Dredze, Koby Crammer, and Fernando Pereira.
\newblock Confidence-weighted linear classification.
\newblock In {\em ICML}, pages 264--271. ACM, 2008.

\bibitem[DHS11]{duchi2011adaptive}
John Duchi, Elad Hazan, and Yoram Singer.
\newblock Adaptive subgradient methods for online learning and stochastic
  optimization.
\newblock {\em Journal of Machine Learning Research}, 12(Jul):2121--2159, 2011.

\bibitem[DSSSC08]{duchi2008efficient}
John Duchi, Shai Shalev-Shwartz, Yoram Singer, and Tushar Chandra.
\newblock Efficient projections onto the l 1-ball for learning in high
  dimensions.
\newblock In {\em ICML}, pages 272--279. ACM, 2008.

\bibitem[DSSST10]{duchi2010composite}
John~C Duchi, Shai Shalev-Shwartz, Yoram Singer, and Ambuj Tewari.
\newblock Composite objective mirror descent.
\newblock In {\em COLT}, pages 14--26, 2010.

\bibitem[Haz16]{hazan2016online}
Elad Hazan.
\newblock Introduction to online convex optimization.
\newblock {\em Foundations and Trends® in Optimization}, 2(3-4):157--325,
  2016.

\bibitem[KB10]{kulis2010implicit}
Brian Kulis and Peter~L Bartlett.
\newblock Implicit online learning.
\newblock In {\em ICML}, pages 575--582, 2010.

\bibitem[McM10]{mcmahan2010unified}
H~Brendan McMahan.
\newblock A unified view of regularized dual averaging and mirror descent with
  implicit updates.
\newblock {\em arXiv preprint arXiv:1009.3240}, 2010.

\bibitem[RM85]{robbins1985stochastic}
Herbert Robbins and Sutton Monro.
\newblock A stochastic approximation method.
\newblock In {\em Herbert Robbins Selected Papers}, pages 102--109. Springer,
  1985.

\bibitem[SS{\etalchar{+}}12]{shalev2012online}
Shai Shalev-Shwartz et~al.
\newblock Online learning and online convex optimization.
\newblock {\em Foundations and Trends{\textregistered} in Machine Learning},
  4(2):107--194, 2012.

\bibitem[SSZ13]{shalev2013stochastic}
Shai Shalev-Shwartz and Tong Zhang.
\newblock Stochastic dual coordinate ascent methods for regularized loss
  minimization.
\newblock {\em Journal of Machine Learning Research}, 14(Feb):567--599, 2013.

\bibitem[SZ14]{shi2014online}
Tianlin Shi and Jun Zhu.
\newblock Online bayesian passive-aggressive learning.
\newblock In {\em ICML}, pages 378--386, 2014.

\bibitem[TA15]{toulis2015scalable}
Panos Toulis and Edoardo~M Airoldi.
\newblock Scalable estimation strategies based on stochastic approximations:
  classical results and new insights.
\newblock {\em Statistics and computing}, 25(4):781--795, 2015.

\bibitem[TA{\etalchar{+}}17]{toulis2017asymptotic}
Panos Toulis, Edoardo~M Airoldi, et~al.
\newblock Asymptotic and finite-sample properties of estimators based on
  stochastic gradients.
\newblock {\em The Annals of Statistics}, 45(4):1694--1727, 2017.

\bibitem[TAR14]{toulis2014statistical}
Panagiotis Toulis, Edoardo Airoldi, and Jason Rennie.
\newblock Statistical analysis of stochastic gradient methods for generalized
  linear models.
\newblock In {\em ICML}, pages 667--675, 2014.

\bibitem[TTA15]{tran2015stochastic}
Dustin Tran, Panos Toulis, and Edoardo~M Airoldi.
\newblock Stochastic gradient descent methods for estimation with large data
  sets.
\newblock {\em arXiv preprint arXiv:1509.06459}, 2015.

\bibitem[TTA16]{toulis2016towards}
Panos Toulis, Dustin Tran, and Edo Airoldi.
\newblock Towards stability and optimality in stochastic gradient descent.
\newblock In {\em Artificial Intelligence and Statistics}, pages 1290--1298,
  2016.

\bibitem[WZH12]{wang2012exact}
Jialei Wang, Peilin Zhao, and Steven~CH Hoi.
\newblock Exact soft confidence-weighted learning.
\newblock In {\em ICML}, pages 107--114, 2012.

\bibitem[Xia10]{xiao2010dual}
Lin Xiao.
\newblock Dual averaging methods for regularized stochastic learning and online
  optimization.
\newblock {\em Journal of Machine Learning Research}, 11(Oct):2543--2596, 2010.

\end{thebibliography}

\clearpage

\appendix
\onecolumn

\begin{proof}[Proof of Lemma \ref{lem:lem1}]
In the proof, we use $\partial f_t(\x_{t}^T\w)$ to denote the subgradient $w.r.t.$ $\w$ and use $\partial f_t(z)|_{z =\x_{t}^T\w} $ to denote the subgradient $w.r.t.$ the scalar $\x_{t}^T\w$. 

For Eq. \eqref{eq:double-implicit}, the optimality condition of $\w_{t+1}$ implies $\forall \w\in\Omega$, and $f_t^{\prime}(\w_{t+1})\in \partial f_t(\w_{t+1}), r^{\prime}(\w_{t+1})\in\partial r(\w_{t+1})$,
\begin{eqnarray}
\langle \w - \w_{t+1},f_t^{\prime}(\w_{t+1})+ r^{\prime}(\w_{t+1})+\frac{1}{\eta_t}(\nabla\psi(\w_{t+1})-\nabla\psi(\w_{t}))\ge 0.\label{eq:opt}
\end{eqnarray}
Then it follows that 
\begin{eqnarray}
&&(f_t(\w_{t+1})+r(\w_{t+1}))- (f_t(\w)+r(\w))\nonumber\\
&\overset{\circlenum{1}}{\le}& \langle f_t^{\prime}(\w_{t+1}) + r^{\prime}(\w_{t+1}) , \w_{t+1}-\w\rangle\nonumber \\ 
&\overset{\circlenum{2}}{=}& \frac{1}{\eta_t}\langle\nabla \psi(\w_t)-\psi(\w_{t+1}), \w_{t+1}-\w\rangle\nonumber\\
&\overset{\circlenum{3}}{=}& \frac{1}{\eta_t}\left(B_{\psi}(\w, \w_{t})-B_{\psi}(\w, \w_{t+1})-B_{\psi}(\w_{t+1}, \w_{t})\right), \label{eq:lem1-1}
\end{eqnarray}
where \circlenum{1} is by the convexity of $f_t( \w)+r(\w)$, \circlenum{2} is by the optimality condition Eq. \eqref{eq:opt},  \circlenum{3} is by the triangle inequality.
Meanwhile
\begin{eqnarray}
&&f_t(\w_{t+1})+r(\w_{t+1}) - (f_t( \w)+r(\w))  \nonumber \\
&=& f_t(\w_{t})+r(\w_t)  - (f_t(\w)+r(\w)) +\langle f_t^{\prime}(\w_{t})+r^{\prime}(\w_t), \w_{t+1}-\w_t\rangle \nonumber \\
&&+  f_t(\w_{t+1}) + r(\w_{t+1})-(f_t(\w_{t})+r(\w_t))
-\langle  f_t^{\prime}(\w_{t})+r^{\prime}(\w_t), \w_{t+1}-\w_t\rangle \nonumber\\
&\overset{\circlenum{1}}{=}&f_t(\w_{t})+r(\w_{t})  - f_t(\w)-r(\w) +\langle f_t^{\prime}(\w_{t})+r^{\prime}(\w_t), \w_{t+1}-\w_t\rangle + \delta_t,\label{eq:lem1-2}
\end{eqnarray}
where \circlenum{1} is by the definition of $\delta_t$ in Eq. \eqref{eq:one-step}.

By Eq. \eqref{eq:lem1-1} and   \eqref{eq:lem1-2}, it follows that

\begin{eqnarray}
&&(f_t(\w_{t})+r(\w_{t}))- (f_t(\w)+r(\w))\nonumber\\
&\le&\frac{1}{\eta_t}\left(B_{\psi}(\w, \w_t) -B_{\psi}(\w, \w_{t+1})-B_{\psi}(\w_{t+1}, \w_t)\right)\nonumber\\
&&\quad-\langle f_t^{\prime}(\w_{t})+r^{\prime}(\w_t), \w_{t+1}-\w_t\rangle-\delta_t\nonumber\\
&\overset{\circlenum{1}}{\le}&\frac{1}{\eta_t}\left(B_{\psi}(\w, \w_t) -B_{\psi}(\w, \w_{t+1})\right)-\frac{\alpha}{2\eta_t}\|\w_{t+1}-\w_t\|^2\nonumber\\
&&\quad-\langle f_t^{\prime}(\w_{t})+r^{\prime}(\w_t), \w_{t+1}-\w_t\rangle-\delta_t\nonumber\\
&\overset{\circlenum{2}}{\le}& \frac{1}{\eta_t}\left(B_{\psi}(\w, \w_t) -B_{\psi}(\w, \w_{t+1})\right)+\frac{\eta_t}{2\alpha}\| f^{\prime}_t(\w_t) + r^{\prime}(\w_t) \|_*^2-\delta_t,\nonumber
\end{eqnarray}
where \circlenum{1} is by the property of Bregman divergence $B_{\psi}(\w_{t+1}, \w_t)\ge \frac{\alpha}{2\eta_t}\|\w_{t+1}-\w_t\|^2$, 
 \circlenum{2} follows from the Fenchel-Young inequality applied to $\|\cdot\|_2^2$.

Lemma \ref{lem:lem1} is proved.
\end{proof}

\begin{proof}[Proof of Theorem \ref{thm:one-step}]
It follows that
\begin{eqnarray}
&&(f_t(\w_{t})+r(\w_{t}))- (f_t(\w)+r(\w))\nonumber\\
&\overset{\circlenum{1}}{\le}& \frac{1}{\eta_t}\left(B_{\psi}(\w, \w_t) -B_{\psi}(\w, \w_{t+1})\right)+\frac{\eta_t}{2\alpha}\| f^{\prime}_t(\w_t) + r^{\prime}(\w_t) \|_*^2-\delta_t\nonumber\\
&\overset{\circlenum{2}}{\le}&   \frac{1}{\eta_t}\left(B_{\psi}(\w, \w_t) -B_{\psi}(\w, \w_{t+1})\right)+ \frac{\eta_t G^2}{2\alpha}-\delta_t,\label{eq:one-step-1}
\end{eqnarray}
where \circlenum{1} is by Lemma \ref{lem:lem1},  and \circlenum{2} is by the assumption $\|f^{\prime}_t(\w_t) + r^{\prime}(\w_t)\|_2\le G$.

In addition, by the assumption  $B_{\psi}(\w, \w_t)\le D^2$ and by setting $\eta_t = \frac{\sqrt{\alpha}D}{G\sqrt{t}}$, we have
\begin{eqnarray}
&&\sum_{t=1}^T \frac{1}{\eta_t}\left(B_{\psi}(\w, \w_t)-B_{\psi}(\w, \w_{t+1})\right)\nonumber \\
&=&\sum_{t=1}^T\left( \frac{G\sqrt{t}}{\sqrt{\alpha}D}\left(B_{\psi}(\w, \w_t)-B_{\psi}(\w, \w_{t+1})\right)\right)\nonumber  \\
&=&\sum_{t=1}^T\left( \frac{G}{\sqrt{\alpha}D}\left( \sqrt{t-1}\|\w_t-\w\|_2^2- \sqrt{t}\|\w_{t+1}-\w\|_2^2 + (\sqrt{t}-\sqrt{t-1})\|\w_t-\w\|_2^2 \right)\right)\nonumber \\
&\le&\frac{GD\sqrt{T}}{\sqrt{\alpha}}  \label{eq:one-step-2}
\end{eqnarray}
and 
\begin{eqnarray*}
\sum_{t=1}^T\frac{\eta_t  G^2}{2} = \sum_{t=1}^T\frac{\sqrt{\alpha}D  G}{2\sqrt{t}}\le\int_{t=1}^T\frac{ GD}{2\sqrt{t}} dt\le \frac{GD\sqrt{T}}{\sqrt{\alpha}}\label{eq:one-step-3}
\end{eqnarray*}
By Eq. \eqref{eq:one-step-1}, \eqref{eq:one-step-2} and \eqref{eq:one-step-2}, we have
\begin{eqnarray*}
\sum_{t=1}^T\left((f_t(\w_{t})+r(\w_{t}))- (f_t(\w)+r(\w))\right)
\le \frac{2GD\sqrt{T}}{\sqrt{\alpha}} -\sum_{t=1}^T\delta_t.\label{eq:tt4}
\end{eqnarray*}
By setting $\w^*=\argmin_{\w\in\bbR^d}\left(\sum_{t=1}^T \left(f_t(\w) + r(\w)\right)\right)$ in Eq. \eqref{eq:tt4}, Theorem \ref{thm:one-step} is proved.
\end{proof}

\begin{lemma}\label{lem:strong1}
Let the sequence $\{\w_t\}$ be defined by the FIOL algorithm in Eq. \eqref{eq:double-implicit}. Assume that for all $t$, $f_t(\w) + r(\w)$ is $\sigma$-strongly convex $w.r.t.$ $\psi(\w)$. Then we have 
\begin{eqnarray}
&&(f_t(\w_{t})+r(\w_{t}))- (f_t(\w)+r(\w))\nonumber\\
&\le&\frac{1}{\eta_t}\left(B_{\psi}(\w, \w_t) -B_{\psi}(\w, \w_{t+1})\right)+\frac{\eta_t}{2\alpha}\| f^{\prime}_t(\w_t) + r^{\prime}(\w_t) \|_*^2- \sigma B_{\psi}(\w, \w_{t+1})-\delta_t.
\end{eqnarray}

\end{lemma}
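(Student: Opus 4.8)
The plan is to replay the proof of Lemma~\ref{lem:lem1} essentially verbatim, changing exactly one step: the place where plain convexity of $f_t+r$ was invoked is upgraded to the $\sigma$-strong convexity $w.r.t.$ $\psi$ assumed here. Everything downstream — the optimality condition \eqref{eq:opt}, the Bregman three-point identity, the $\delta_t$ bookkeeping \eqref{eq:lem1-2}, and the closing Fenchel--Young step — is left untouched, and the net effect is the appearance of a single extra nonpositive term $-\sigma B_{\psi}(\w,\w_{t+1})$ on the right-hand side.

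First I would write down the optimality condition \eqref{eq:opt} for the FIOL update exactly as in Lemma~\ref{lem:lem1}. Next, in place of the step marked $\circlenum{1}$ there (which bounded $(f_t(\w_{t+1})+r(\w_{t+1}))-(f_t(\w)+r(\w))$ using convexity at $\w_{t+1}$), I would apply the $\sigma$-strong convexity inequality anchored at $\w_{t+1}$,
\begin{align}
f_t(\w)+r(\w)\ge f_t(\w_{t+1})+r(\w_{t+1})+\langle f_t^{\prime}(\w_{t+1})+r^{\prime}(\w_{t+1}),\w-\w_{t+1}\rangle+\sigma B_{\psi}(\w,\w_{t+1}),\nonumber
\end{align}
which rearranges to the earlier subgradient bound with the extra additive term $-\sigma B_{\psi}(\w,\w_{t+1})$. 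Substituting the optimality condition \eqref{eq:opt} and using the three-point identity for the Bregman divergence then reproduces the chain ending in $\tfrac{1}{\eta_t}\big(B_{\psi}(\w,\w_t)-B_{\psi}(\w,\w_{t+1})-B_{\psi}(\w_{t+1},\w_t)\big)-\sigma B_{\psi}(\w,\w_{t+1})$.

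From here I would reuse the purely algebraic identity \eqref{eq:lem1-2}, which rewrites the left-hand side in terms of $\w_t$ and the definition of $\delta_t$ in \eqref{eq:one-step} and does not invoke convexity at all. Combining the two displays, bounding $-\tfrac{1}{\eta_t}B_{\psi}(\w_{t+1},\w_t)$ by $-\tfrac{\alpha}{2\eta_t}\|\w_{t+1}-\w_t\|^2$, and closing with the same Fenchel--Young inequality that plays $\langle f_t^{\prime}(\w_t)+r^{\prime}(\w_t),\w_{t+1}-\w_t\rangle$ against $\tfrac{\alpha}{2\eta_t}\|\w_{t+1}-\w_t\|^2$ yields the claimed bound.

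I do not anticipate a genuine obstacle; the only point requiring care is that the strong-convexity inequality must be anchored at the \emph{new} iterate $\w_{t+1}$ — the point at which the subgradients in the optimality condition are taken — so that the surviving term is $B_{\psi}(\w,\w_{t+1})$ rather than $B_{\psi}(\w,\w_t)$. Because this term is consumed neither by the telescoping Bregman differences nor by the Fenchel--Young step, it must be carried unchanged all the way to the final line. Getting this anchor point right is exactly what makes the $O(\log T)$ rate of Theorem~\ref{thm:strong1} go through, since with the schedule $\eta_t=1/(\sigma t)$ the combination $\tfrac{1}{\eta_t}B_{\psi}(\w,\w_t)-\big(\tfrac{1}{\eta_t}+\sigma\big)B_{\psi}(\w,\w_{t+1})=\sigma t\,B_{\psi}(\w,\w_t)-\sigma(t+1)B_{\psi}(\w,\w_{t+1})$ telescopes precisely.
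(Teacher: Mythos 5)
Your proposal is correct and follows essentially the same route as the paper: the paper's proof likewise replaces the plain convexity step of Lemma~\ref{lem:lem1} with the $\sigma$-strong convexity inequality anchored at $\w_{t+1}$, namely $f_t(\w_{t+1})+r(\w_{t+1})- (f_t(\w)+r(\w)) + \sigma B_{\psi}(\w, \w_{t+1})\le \langle f_t^{\prime}(\w_{t+1}) + r^{\prime}(\w_{t+1}) , \w_{t+1}-\w\rangle$, and then proceeds verbatim as in Lemma~\ref{lem:lem1}, which is exactly what you do. Your closing observation about why the anchor must be $\w_{t+1}$ (so that the term telescopes under $\eta_t=1/(\sigma t)$ in Theorem~\ref{thm:strong1}) is a correct and worthwhile elaboration that the paper leaves implicit.
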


\begin{proof}[Proof of Lemma \ref{lem:strong1}]
The proof is effectively identical to that of Lemma \ref{lem:lem1}. Note that 
\begin{eqnarray}
f_t(\w_{t+1})+r(\w_{t+1}))- (f_t(\w)+r(\w)) + \sigma B_{\psi}(\w, \w_{t+1})\le \langle f_t^{\prime}(\w_{t+1}) + r^{\prime}(\w_{t+1}) , \w_{t+1}-\w\rangle\nonumber \\ 
\end{eqnarray}
Now we simply proceed as in the proof of Lemma \ref{lem:lem1}.
\end{proof}

\begin{proof}[Proof of Theorem \ref{thm:strong1}]
By Lemma \ref{lem:strong1}, it follows that 
\begin{eqnarray}
&&\sum_{t=1}^T f_t(\w_t) + r(\w_t) - f_t(\w) - r(\w)\nonumber\\
&\le&\sum_{t=1}^T \left( \frac{1}{\eta_t}\left(B_{\psi}(\w, \w_t) -B_{\psi}(\w, \w_{t+1})\right)+\frac{\eta_t}{2\alpha}\| f^{\prime}_t(\w_t) + r^{\prime}(\w_t) \|_2^2- \sigma B_{\psi}(\w, \w_{t+1})-\delta_t\right) \nonumber\\
&\le&\frac{1}{\eta_1}B_{\psi}(\w, \w_1) - \frac{1}{\eta_T}B_{\psi}(\w, \w_{t+1}) + \sum_{t=1}^{T-1}B_{\psi}(\w, \w_{t+1})\left( \left(\frac{1}{\eta_{t+1}}-\frac{1}{\eta_{t}}\right)-\sigma  \right) \nonumber\\
&&+\frac{\eta_t}{2\alpha}\| f^{\prime}_t(\w_t) + r^{\prime}(\w_t) \|_*^2 -\sum_{t=1}^T \delta_t.
\end{eqnarray}
By setting $\eta_t = \frac{1}{\sigma t}$, then we have $\frac{1}{\eta_{t+1}}-\frac{1}{\eta_{t}} - \sigma =0$. By assuming that $\| f^{\prime}_t(\w_t) + r^{\prime}(\w_t) \|_2\le G$, we have
\begin{eqnarray}
&&\sum_{t=1}^T f_t(\w_t) + r(\w_t) - f_t(\w) - r(\w)\nonumber\\
&\le& \sigma B_{\psi}(\w, \w_1) + \frac{G^2}{2\alpha}\sum_{t=1}^T\eta_t-\sum_{t=1}^T \delta_t\nonumber\\
&\le& \sigma B_{\psi}(\w, \w_1) +  \frac{G^2\log T}{2\alpha\sigma}-\sum_{t=1}^T \delta_t\label{eq:ttt5}
\end{eqnarray}
By setting $\w^*=\argmin_{\w\in\bbR^d}\left(\sum_{t=1}^T \left(f_t(\w) + r(\w)\right)\right)$ in Eq. \eqref{eq:ttt5}, Theorem \ref{thm:strong1} is proved.

\end{proof}

\begin{proof}[Proof of Proposition \ref{prop:1}]

By the  assumption of $f_t(z) \overset{\defi}{=} \phi(\x_t^T\w)$ and $\phi(z)$ is $\gamma$-strongly convex $w.r.t.$ $\frac{1}{2}\|\cdot\|_2^2$,  it follows that
\begin{eqnarray*}
\hat{\delta}_t  &\overset{\defi}{=}& f_t(\w_{t+1})-f_t(\w_{t})-\langle f_t^{\prime}(\w_{t}), \w_{t+1}-\w_t\rangle\nonumber\\
&=&f_t(\w_{t+1})-f_t(\w_{t})- ( \phi^{\prime}_t(z)|_{z=\x_t^T\w})\cdot (\w_{t+1}-\w_t)\\
&\ge& \frac{\gamma}{2}\|\w_{t+1}-\w_t\|_2^2 = \frac{\gamma}{2}(\w_{t+1}-\w_t)^T\x_t\x_t^T(\w_{t+1}-\w_t)\label{eq:log-descrese-1}
\end{eqnarray*}
Then we have
\begin{eqnarray}
&&(f_t(\w_{t})+r(\w_{t+1}))- (f(\x_t^T \w)+r(\w))\nonumber\\
&\overset{\circlenum{1}}{\le}&\frac{1}{\eta_t}(\frac{1}{2}\|\w_t-\w\|_2^2-\frac{1}{2}\|\w_{t+1}-\w\|_2^2-\frac{1}{2}\|\w_t-\w_{t+1}\|_2^2)\nonumber\\
&&\quad-\langle f_t^{\prime}(\w_{t}), \w_{t+1}-\w_t\rangle-\hat{\delta}_t\nonumber\\
&\overset{\circlenum{2}}{\le}&\frac{1}{\eta_t}(\frac{1}{2}\|\w_t-\w\|_2^2-\frac{1}{2}\|\w_{t+1}-\w\|_2^2-\frac{1}{2}\|\w_t-\w_{t+1}\|_2^2)\nonumber\\
&& -\langle f_t^{\prime}(\w_{t}), \w_{t+1}-\w_t\rangle- \frac{\gamma}{2}(\w_{t+1}-\w_t)^T\x_t\x_t^T(\w_{t+1}-\w_t)\nonumber\\
&{=}& \frac{1}{\eta_t}(\frac{1}{2}\|\w_t-\w\|_2^2-\frac{1}{2}\|\w_{t+1}-\w\|_2^2)-\langle f_t^{\prime}(\w_{t}), \w_{t+1}-\w_t\rangle\nonumber\\
&& -   \frac{1}{2\eta_t}(\w_{t+1}-\w_t)^T(I + {\gamma\eta_t}\x_t\x_t^T)(\w_{t+1}-\w_t)\nonumber\\
&{=}& \frac{1}{\eta_t}(\frac{1}{2}\|\w_t-\w\|_2^2-\frac{1}{2}\|\w_{t+1}-\w\|_2^2)\nonumber\\
&&-\langle  (I + {\gamma\eta_t}\x_t\x_t^T)^{-1/2}f_t^{\prime}(\w_{t}), (I + {\gamma\eta_t}\x_t\x_t^T)^{1/2}(\w_{t+1}-\w_t)\rangle\nonumber\\
&& -   \frac{1}{2\eta_t}\|(I + {\gamma\eta_t}\x_t\x_t^T)^{\frac{1}{2}}(\w_{t+1}-\w_t)\|_2^2\nonumber\\
&\overset{\circlenum{3}}{\le}& \frac{1}{\eta_t}(\frac{1}{2}\|\w_t-\w\|_2^2-\frac{1}{2}\|\w_{t+1}-\w\|_2^2) + \frac{\eta_t}{2}\|(I + {\gamma\eta_t}\x_t\x_t^T)^{-1/2}f_t^{\prime}(\w_{t})\|_2^2\nonumber\\
&=&\frac{1}{\eta_t}(\frac{1}{2}\|\w_t-\w\|_2^2-\frac{1}{2}\|\w_{t+1}-\w\|_2^2) + \frac{\eta_t}{2}(f_t^{\prime}(\w_{t}))^T(I + {\gamma\eta_t}\x_t\x_t^T)^{-1}f_t^{\prime}(\w_{t})\nonumber\\
&\overset{\circlenum{4}}{\le}&\frac{1}{\eta_t}\left(\frac{1}{2}\|\w_t-\w\|_2^2-\frac{1}{2}\|\w_{t+1}-\w\|_2^2\right) + \frac{\eta_t}{2(1+\gamma\eta_t\|\x_t\|_2^2)}\| f_t^{\prime}(\x_t^T\w)\|_2^2,\nonumber \\
&\overset{\circlenum{5}}{=}&\frac{1}{\eta_t}\left(\frac{1}{2}\|\w_t-\w\|_2^2-\frac{1}{2}\|\w_{t+1}-\w\|_2^2\right) + \frac{\eta_t\|\x_t\|_2^2 (\phi^{\prime}(z))^2|_{z=\x_t^T\w_t}}{2(1+\gamma\eta_t\|\x_t\|_2^2)},\label{eq:log-descrese-1}
\end{eqnarray}
where \circlenum{1} is by Lemma \ref{lem:lem1},  \circlenum{2} is by Eq. \eqref{eq:log-descrese-1}, \circlenum{3} is by the Fenchel-Young inequality applied to $\|\cdot\|_2^2$, \circlenum{4} is by the fact $ f^{\prime}_t(\x_t^T\w)=(\phi^{\prime}_t(z)|_{z=\x_t^T\w})\cdot\x_t$ is a eigenvector of $(I + {\gamma\eta_t}\x_t\x_t^T)^{-1}$ and the corresponding eigenvalue is $\frac{1}{1+\gamma\eta_t\|\x_t\|_2^2}$, \circlenum{5} is by 
$f_t^{\prime}(\x_t^T\w)=(\phi^{\prime}_t(z)|_{z=\x_t^T\w})\cdot\x_t$.

Then summing Eq. \eqref{eq:log-descrese-1} from $t=1$ to $T$, rearranging the resulted inequality and drop out the $r(\w_{t+1})$ term, we prove the Proposition \ref{prop:1}.

\end{proof}

\end{document}